\newtheorem{theorem}{Theorem}[section]
\newtheorem{corollary}[theorem]{Corollary}
\newtheorem{lemma}[theorem]{Lemma}
\theoremstyle{definition}
\newtheorem{definition}{Definition}[section]
\theoremstyle{remark}
\newtheorem*{remark}{Remark}
\def\eqref#1{equation~\ref{#1}}
\def\1{\bm{1}}
\DeclareMathAlphabet{\mathsfit}{\encodingdefault}{\sfdefault}{m}{sl}
\SetMathAlphabet{\mathsfit}{bold}{\encodingdefault}{\sfdefault}{bx}{n}
\newcolumntype{Y}{>{\raggedright\arraybackslash}X}
\title{Tight Robustness Certificates and \\ Wasserstein Distributional Attacks for \\ Deep Neural Networks}
\author{Bach C. Le, Tung V. Dao, Binh T. Nguyen, Hong T.M. Chu\\
College of Engineering \& Computer Science\\
VinUniversity\\
Hanoi, Vietnam\\
\texttt{\{bach.lc,tung.dv,binh.nt2,hong.ctm\}@vinuni.edu.vn}\\
}
\begin{document}

\maketitle

\begin{abstract}
    Wasserstein distributionally robust optimization (WDRO) provides a framework for adversarial robustness, yet existing methods based on global Lipschitz continuity or strong duality often yield loose upper bounds or require prohibitive computation. We address these limitations with a primal approach and adopt a notion of \emph{exact} Lipschitz certificates to tighten this upper bound of WDRO. For ReLU networks, we leverage the piecewise-affine structure on activation cells to obtain an \emph{exact} tractable characterization of the corresponding WDRO problem. We further extend our analysis to modern architectures with smooth activations (e.g., GELU, SiLU), such as Transformers. Additionally, we propose novel Wasserstein Distributional Attacks (WDA, WDA++) that construct candidates for the worst-case distribution. Compared to existing attacks that are restricted to point-wise perturbations, our methods offer greater flexibility in the number and location of attack points. Extensive evaluations demonstrate that our proposed framework achieves competitive robust accuracy against state-of-the-art baselines while offering tighter certificates than existing methods. Our code is available at \url{https://github.com/OLab-Repo/WDA}
\end{abstract}

\section{Introduction}

Modern deep networks achieve remarkable accuracy yet remain fragile to distribution shift and adversarial perturbations \cite{szegedy2014intriguing,goodfellow2014explaining,kurakin2018adversarial,hendrycks2018benchmarking,ovadia2019can,taori2020measuring,koh2021wilds}, raising concerns about their reliability in deployment. A principled avenue for robustness is Wasserstein distributionally robust optimization (WDRO, \citealt{esfahani2018,gao2023}), which controls worst-case test risk over an ambiguity set within a Wasserstein ball around the empirical distribution and admits tight dual characterizations from optimal transport \citep{villani2008,santambrogio2015}. While numerous defences have been proposed, a fundamental gap persists between theoretical robustness certificates and practical adversarial evaluation: existing Lipschitz-based certificates often provide loose upper bounds that vastly overestimate the true worst-case loss \citep{virmaux2018lipschitz}, while standard attacks restrict perturbations to fixed-radius balls around individual points \citep{katz2017reluplex, ehlers2017formal, weng2018towards, singh2018fast}. This mismatch stems from two limitations: certificates typically rely on global worst-case analysis that ignores the actual network geometry traversed by data, and attacks consider only point-wise perturbations rather than distributional shifts permitted by Wasserstein threat models \citep{singh2018fast,gao2023}. The discrepancy is particularly pronounced for modern architectures with ReLU activations, where the network behaves as a piecewise-affine function whose local properties vary dramatically across regions \citep{jordan2020exactly}, and those with smooth activations (GELU, SiLU/Swish) exhibit complex nonlinear geometry \citep{hendrycks2016gaussian,ramachandran2017searching,elfwing2018sigmoid}. In this work, we aim to address both sides of this gap: our contributions can be summarized as follows.

\begin{enumerate}[leftmargin=5mm]
    \item For a class of networks with Rectified Linear Unit (ReLU) activations \citep{nair2010rectified}, we analyse the upper and lower bounds of the Wasserstein Distributional Robust Optimization (WDRO) problem by connecting with the tight Lipschitz constant studied in \citealp{jordan2020exactly}.
    Our first theoretical result yields an upper bound of WDRO induced by $\bm{L} \triangleq 2^{1/r} \max_{\bm{D} \in\mathcal{D}_{\mathcal{X}}}\left\|J_{\bm{D}}\right\|_{{r}\to {s}}$, where $J_{\bm{D}}$ is general Jacobian of the logit map, see \ref{def:mask}.  In addition, we derive a lower bound of WDRO by constructing a concrete and finite worst-case distribution, see \eqref{eq:P_adv}. This worst-case distribution is constructed by perturbing the empirical sample along the direction in which the logit map is most varied. Moreover, we provide a sufficient condition where our lower and upper bounds match, and simulate an instance to illustrate this tightness, see Figure~\ref{fig:convergence}. It is worth noting that not every Lipschitz constant yields a tight bound of WDRO, see Remark~\ref{remark}.
    \item We further analyse the upper and lower bounds of the WDRO problem for a class of architectures with smooth activation and cross-entropy loss. Unlike ReLU activation or DLR loss, which might create degeneration edges, the global differentiability of smooth networks enables an exact characterization of the Lipschitz constant via the gradient norm. Similar to the analysis of the ReLU networks, we obtain the upper bound of the WDRO as $\bm{L} \triangleq 2^{1/r} \max_{x\in\mathcal{X}}\left\|\nabla\theta(x) \right\|_{{r}\to {s}}$ while the worst-case distribution and lower bound are constructed similar to the ReLU networks.
    \item Finally, we bridge the gap between WDRO theory and adversarial evaluation by introducing the Wasserstein Distributional Attack (WDA) and its adaptive variant (WDA++), which directly constructs adversarial distributions within the Wasserstein ball rather than restricting to point-wise perturbations. Unlike existing attacks that place all adversarial examples on the $\epsilon$-ball boundary, WDA flexibly interpolates between point-wise ($\kappa=1$) and truly distributional attacks ($\kappa>1$) by supporting adversarial distributions on $2N$ points, see Figure~\ref{fig:inclusion_and_attack}. WDA++ further optimizes the $2N$-support distribution via adaptive transport budget allocation. Empirically, WDA and WDA++ consistently find stronger adversarial examples than state-of-the-art methods across diverse settings (\cref{tab:main-comparison}). When integrated into the Adaptive Auto Attack framework \citep{liu2022practical}, our method matches or exceeds the ensemble performance of A$^{3}$. Our results demonstrate that the distributional perspective not only provides tighter theoretical certificates but also yields more effective attacks. Crucially, this exposes the limitations of current evaluation standards (e.g., AutoAttack \citep{croce2020reliable}, RobustBench \citep{croce2robustbench}) which significantly overestimate robustness by restricting to $\Omega_\infty$ rather than the larger $\Omega_1$ ambiguity set assumed by certificates. 
\end{enumerate}

\begin{figure}[!htbp]
    \centering
    \includegraphics[width=0.6\columnwidth]{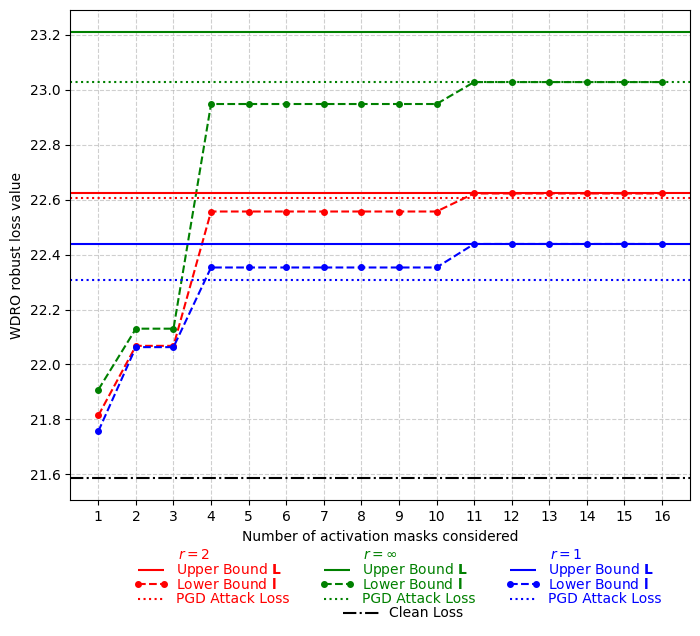}
    \caption{WDRO bounds and PGD attack loss for a ReLU classifier with $n=K=2$ and one hidden layer of dimension 8. Lower bound curves are the cumulative $\bm{l}$ as more reachable activation masks are considered.}
    \label{fig:convergence}
\end{figure}

\begin{figure}[!htbp]
     \centering
     \begin{subfigure}[b]{0.49\linewidth}
         \centering
         \includegraphics[width=\linewidth]{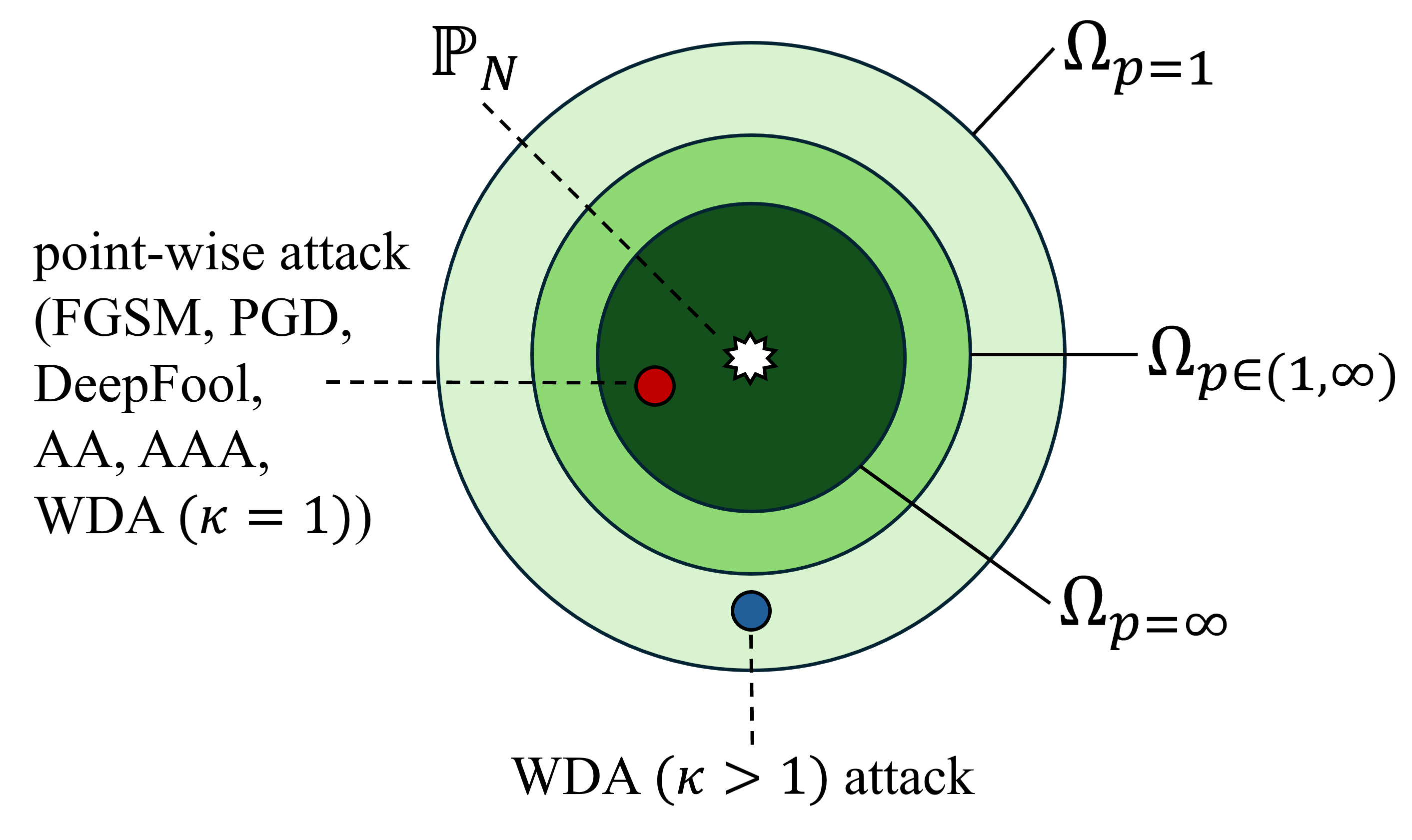}
     \end{subfigure}
     \hfill 
     \begin{subfigure}[b]{0.49\linewidth}
         \centering
         \includegraphics[width=\linewidth]{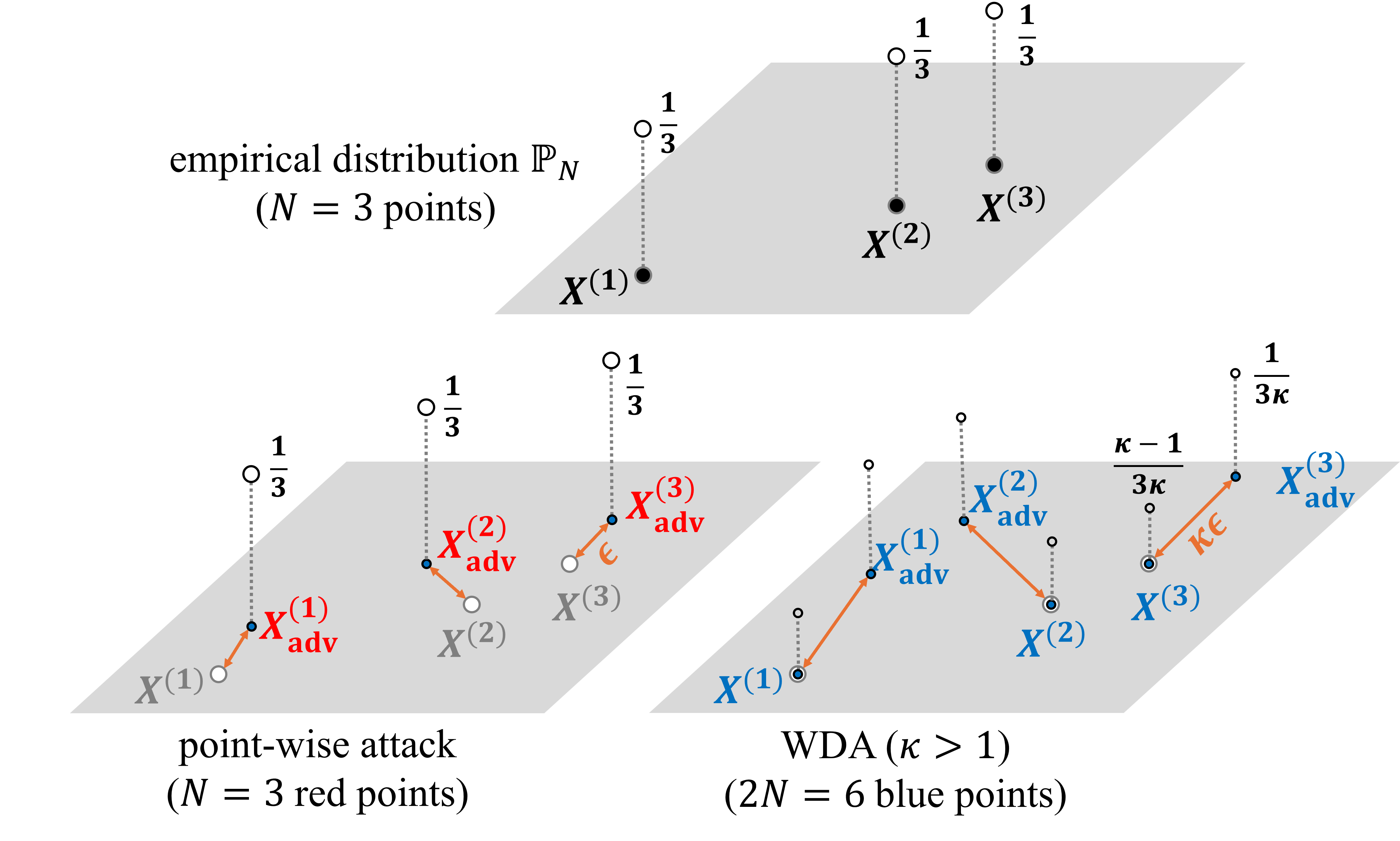}
     \end{subfigure}
     \caption{\textbf{Left:} Wasserstein ambiguity ball $\Omega_{p} = \left\{ \mathbb{P} \colon \mathcal{W}_{d,p}(\mathbb{P},\mathbb{P}_N) \leq \epsilon \right\}$ inclusion and its admissible attacks. Our proposed Wasserstein Distributional Attack (WDA) with $\kappa\geq1$ includes its special case $\kappa=1$ as a point-wise attack, and produces a distributional attack when $\kappa>1$. Note that most of the existing tight certificates estimated an upper bound of WDRO w.r.t.  $\Omega_{p=1}$, not $\Omega_{p=\infty}$. \textbf{Right:} Visualization of point-wise attack (\(N\) adversarial samples) versus our WDA (\(2N\) adversarial samples). Our WDA allows not only a larger number of supports but also a wider range of perturbations.}
     \label{fig:inclusion_and_attack}
\end{figure}

\section{Preliminaries}
\label{sec:preliminaries}

\paragraph{Notations.} 
We denote basis vector as $\bm{e}_k$; indicator function as $\bm{1}_{\{\cdot\}}$; Dirac measure as $\bm{\delta}_z$; input dimension $n$, and output dimension as $K$. An empirical dataset is denoted $\{Z^{(1)},\dots,Z^{(N)}\}$ with $Z=(x,y)\in\mathcal{Z} = \mathcal{X} \times \mathcal{Y}$ where $\mathcal{X}\subset \mathbb{R}^n$ and $\mathcal{Y} \subset \mathbb{R}^K$; empirical distribution $\mathbb{P}_N = \sum_i \mu_i\bm{\delta}_{Z^{(i)}}$ with $Z^{(i)}=(x^{(i)}, y^{(i)}=\bm{e}_{k_i})$. Norms $\|\cdot\|_{r}$ and $\|\cdot\|_{s}$ are dual with $1/r + 1/s = 1$. For a matrix $A$, $\|A\|_{{r}\to{s}}=\sup_{\|u\|_{r}=1}\|Au\|_{s}$. Rectifier $(\cdot)_+=\max{\{0,\cdot\}}$. Recession cone $\mathrm{rec}(\cdot)$. Interior set $\mathrm{int}(\cdot)$. Ground cost $d((x',y'),(x,y)) = \|x'-x\|_{r} + \infty \cdot\bm{1}_{\{y'\ne y\}} $. Cross-entropy loss $\ell(x,y;\theta) = -\sum_{k=1}^K y_k\log \operatorname{softmax} (\theta(x))_k$. Analogous DLR loss as defined in \cite{croce2020reliable}. Dual-norm maximizer $\mathcal{M}_r$
\begin{equation} 
\begin{aligned}
    \mathcal{M}_r\colon g \mapsto \arg\max_{h}\left\{ \langle g,h \rangle \mid \|h\|_r = 1 \right\} 
    = \left\{ 
    \begin{array}{ll}
    \operatorname{sign}(g) & \text{if } r = \infty,\\
    g / \|g\|_2 & \text{if } r = 2,\\
    \operatorname{sign}({g}_{k'}) \bm{e}_{k'} \text{ with } k'\in\arg\max_k |g_k| & \text{if } r = 1,
    \end{array} \right. 
\end{aligned}
\label{eq:Mr}
\end{equation}
and projector $\Pi_{r,{x},R}$ 
\begin{equation}
\Pi_{r,x,R}(z) \triangleq \arg\min_{\xi}\Bigl\{\|\xi-z\|_2^2 \ \big|\ \|\xi-x\|_r \le R\Bigr\}.
\label{eq:Pir}
\end{equation}

\paragraph{WDRO.} 
Robustness certificates aim to make model predictions trustworthy under adversarial manipulation \citep{wong2018provable,cohen2019certified,salman2019provably}. The empirical risk minimization model $\inf_{\theta}\mathbb{E}_{\mathbb{P}_N}[\ell(Z;\theta)]$ optimizes average performance on the observed data but offers no protection against worst-case shifts nearby. Distributionally robust optimization (DRO) addresses this by choosing parameters that perform well against all distributions within a prescribed neighbourhood: $\inf_{\theta} \sup_{\mathbb{P}\in\mathcal{P}} \mathbb{E}_{\mathbb{P}}[\ell(Z;\theta)].$ Here, the worst-case loss is taken over all admissible distributions $\mathbb{P}\in\mathcal{P}$. The ambiguity (or uncertainty) set $\mathcal{P}$ is often constructed by collecting all distributions $\mathbb{P}$ that are similar to the empirical distribution $\mathbb{P}_N$. 

In this work, we focus on the Wasserstein ambiguity set, which is a ball centred at $\mathbb{P}_N$ under the Wasserstein distance. Given a ground cost $d$ on the space of data $\mathcal{Z}$, the Wasserstein distance $\mathcal W_{d,p}$ \citep{villani2008} between two distributions $\mathbb{P}$ and $\mathbb{Q}$ is defined for $p\in [1,\infty)$ as
\[
\mathcal W_{d,p}(\mathbb P,\mathbb Q) \triangleq \left(\inf_{\pi\in\Pi(\mathbb P,\mathbb Q)}\int_{\mathcal Z\times\mathcal Z}d^p(z',z)\,\mathrm d\pi(z',z) \right)^{1/p}
\] 
and $\mathcal W_{d,p}(\mathbb P,\mathbb Q) \triangleq \inf_{\pi \in \Pi(\mathbb{P},\mathbb{Q})}\operatorname{ess.sup}_{{\pi}}(d)$ for $p=\infty$.
Intuitively, the Wasserstein distance between two distributions $\mathbb{P}$ and $\mathbb{Q}$ is defined as the minimum cost to transport the mass of $\mathbb{P}$ to $\mathbb{Q}$. The WDRO problem with a given budget of perturbation $\epsilon>0$ can be written as
\begin{equation}
    \inf_{\theta} \sup_{\mathbb{P}\in\Omega_p } \mathbb{E}_{\mathbb{P}}[\ell(Z;\theta)] \text{ where } \Omega_p = \left\{\mathbb{P}\mid \mathcal W_{d,p}(\mathbb P,\mathbb{P}_N) \leq \epsilon \right\}. \label{eq:min-max}
\end{equation}
It is worth noting that $\mathcal{W}_{d,p} \leq \mathcal{W}_{d,p'} $ if $p \leq p'$, thus $\Omega_1 \supseteq \Omega_p\supseteq \Omega_{p'} \supseteq\Omega_{\infty} $ (see \cref{fig:inclusion_and_attack}). For more  details on Wasserstein distributionally robust optimization, we refer reader to \citet{kuhn2019wasserstein} and our Appendix~\ref{app:prelim}.

\paragraph{Lipschitz Certificate.} For $p=1$, the worst-case risk over a Wasserstein ball admits the standard Lipschitz upper bound
\begin{equation}
    \sup_{\mathbb{P}\in\Omega_1 } \mathbb{E}_{\mathbb{P}}[\ell(Z;\theta)]\leq \mathbb{E}_{\mathbb{P}_N}[\ell(Z;\theta)] + L\epsilon. \label{eq:Lip_cert}
\end{equation}
where $L$ is any Lipschitz constant of $z\mapsto \ell(z;\theta)$ with respect to the ground cost. This inequality follows from weak duality and is widely used to make the WDRO objective tractable: one replaces the inner maximization by the surrogate $L\epsilon$ and then controls $L$ \citep{esfahani2018,blanchet2019,gao2023,gao2024}. In practice, estimating $L$ reduces to bounding the network’s (global or local) Lipschitz modulus, e.g., fast global products of per-layer operator norms \citep{virmaux2018lipschitz} or tighter activation-aware/local certificates \citep{jordan2020exactly,shi2022efficiently}.

\paragraph{Adversarial Attack.} Adversarial attack methods often construct a perturbed distribution by shifting each sample $X^{(i)}$ along a specific adversarial direction $u^{(i)}$ to get $X_{\mathrm{adv}}^{(i)}$ \citep{goodfellow2014explaining, moosavidezfooli2016deepfool,carlini2017towards}. These methods are essentially point-wise attacks, which draws a distribution $\mathbb{P}_{\mathrm{adv}}={\scriptstyle\sum_{i=1}^{N}\frac{1}{N}}\bm{\delta}_{(X^{(i)}_{\mathrm{adv}},Y^{(i)})}$ in the Wasserstein ambiguity set $\Omega_{p}=\{\mathbb{P} \colon\mathcal{W}_{d,p}\leq\epsilon\}$ when $p=\infty$ (see \cref{fig:inclusion_and_attack}). Whereas, in the $p=1$ case, the ambiguity set only constrains the average transportation cost under an optimal coupling. Hence, the adversary may move some points farther and others less as long as the mean cost stays within budget. This creates a significant gap between the robustness measured against $\Omega_{p=\infty}$ attacks and the theoretical robustness or Lipschitz certificates \eqref{eq:Lip_cert} which are developed for $\Omega_{p=1}$ \citep{esfahani2018,carlini2019evaluating,rice2021robustness}.

\section{Tractable Interpretation of WDRO for Neural Networks} 
\label{sec:theory}

For certain shallow and convex models (e.g., linear regression, support vector machines, etc.), the tractable representation of the WDRO problem~\eqref{eq:min-max} is well-established in the literature \citep{esfahani2018,blanchet2019,gao2023,gao2024}. This tractable form enables a computational advantage and provides a clear interpretation of the robustness of regularization mechanisms. In that line of work, the Lipschitz constant often provides a practical and tight upper bound of the corresponding upper bounds. However, when the loss is non-convex, the Lipschitz certificate is not always tight, as outlined in the following remark.
\begin{remark} \label{remark}
    Consider a single-point empirical distribution $\mathbb{P}_{N} = \bm{\delta}_{\{X^{(1)}=2\}}$ and a loss function given by 
    \[\ell(x)= \begin{cases}
            |x| \quad \text{if } |x|\leq 1,\\
            \frac{1}{2}|x|+\frac{1}{2} \quad \text{otherwise}.
        \end{cases}\]
     Then $\ell$ is Lipschitz with Lipschitz constant 1, however for any $\epsilon > 0$,
     \[
     \sup_{\Omega_1} \mathbb{E}_{\mathbb{P}}[\ell(X)] = \mathbb{E}_{\mathbb{P}_N}[\ell(X)] + \frac{1}{2}\epsilon.
     \]
\end{remark}
In the following, our main theoretical results (\cref{thm:main-ReLU,thm:main-smooth}) show that Lipschitz modulus provides a tight upper bound for the WDRO problem~\eqref{eq:min-max} for a class of ReLU neural networks and smooth activated neural networks.

\subsection{Exact and tractable interpretation of WDRO for ReLU neural networks}

For a broad class of ReLU networks, the tight (local) Lipschitz constant can be found exactly via activation patterns. For example, for any $H$-layer ReLU network $\theta(x) = W_{H+1}( \operatorname{ReLU}(\cdots (W_1 x + b_1)\cdots )+b_{H})$, let
\begin{equation}
    L_{\theta} = \sup_{x\in\mathcal{X}} \sup_{J\in\partial\theta(x)} \| J\|_{r\rightarrow \tilde{r}}, \label{eq:Ltheta}
\end{equation}
where $J\in\partial\theta(x)$ is a general Jacobian of $\theta$ at $x$, then \citet[Theorem 1]{jordan2020exactly} has shown the inequality that $\|\theta(x')-\theta(x)\|_{\tilde{r}} \leq L_{\theta} \times \|x'-x\|_r $ for any $x',x\in\mathcal{X}$. Moreover, if $\theta$ is in general position \citep[Definition 4]{jordan2020exactly}, then the chain rule applies and any general Jacobian $J$ must have a form as $W_{H+1} D_H W_{H}\cdots D_1 W_1 $ for some $[0,1]$-diagonal matrix $D_h$, $h=1,\dots,H$. It is worth noting that the set of ReLU networks \textit{not} in general position is negligible \citep[Theorem 3]{jordan2020exactly}.  Now in \eqref{eq:Ltheta}, the maximization of a convex function (norm operator) is achieved at the vertices, thus we only need to consider $0/1$-diagonal matrix $D_h$.

We formally introduce the concept of mask as follows.
\begin{definition}[Mask and Cell] \label{def:mask}
    Let \(\theta(x) = W_{H+1}( \operatorname{ReLU}(\cdots (W_1 x + b_1)\cdots )+b_{H})\) be a ReLU network which is in general position. For any tuple \(\bm{D} = (D_1,\dots,D_H)\), we define 
    \begin{equation*}\label{eq:JD}
        J_{\bm{D}} = W_{H+1} D_H W_{H}\cdots D_1 W_1.
    \end{equation*}
    For any \(x\in\mathcal{X}\), we define the set of all 0/1-diagonal masks at \(x\)  as 
    \begin{equation*}\label{eq:Dx}
        \mathcal{D}_x = \left\{\bm{D} = (D_1,\dots,D_H) \mid  J_{\bm{D}} \in \partial\theta(x),\,  D_h \text{ is 0/1-diagonal},\, h=1,\dots,H\right\}
    \end{equation*}
    We denote \(\mathcal{D}_{\mathcal{X}} = \cup_{x\in\mathcal{X}}\mathcal{D}_x \) as the (finite) set of all possible masks.

    For any mask \(\bm{D}= (D_1,\dots,D_H)\in\mathcal{D}_{x}\), let \(\mathcal{C}_{\bm{D}}\) be the cell, which is an open linear region, defined by 
    \begin{equation*}
        \mathcal{C}_{\bm{D}} = \left\{ x \mid \operatorname{pre}_{h}(x)_j >0 \text{ if } D_h(j,j) =1 \text{ and } \operatorname{pre}_{h}(x)_j <0  \text{ if } D_h(j,j) =0,\, h=1,\dots,H  \right\},
    \end{equation*}
    where the pre-activation functions are defined as
\begin{equation*}
    \operatorname{pre}_h\colon x \mapsto W_{h}( \operatorname{ReLU}(\cdots (W_1 x + b_1)\cdots )+b_{h}).
\end{equation*}
\end{definition}
Given this definition, and noting that $\mathcal{D}_\mathcal{X}$ is finite, one can rewrite \eqref{eq:Ltheta} as $L_{\theta} = \max_{\bm{D}\in\mathcal{D}_{\mathcal{X}}} \| J_{\bm{D}} \|_{r\rightarrow \tilde{r}}$. We adopt this notion and show that it induces an upper bound for the Wasserstein distributional robust optimization (WDRO) problem \eqref{eq:min-max} with cross-entropy loss. Moreover, this upper bound is tight for a class of monotonic ReLU networks.
\begin{theorem}[WDRO for ReLU Networks]
\label{thm:main-ReLU}
    Given a ReLU network 
    \[
    \theta(x) =  W_{H+1}( \operatorname{ReLU}(\cdots (W_1 x + b_1)\cdots )+b_{H})
    \]
    being in general position, \(1/r+1/s=1\) and \(\ell\) being the cross-entropy or DLR loss, define 
    \begin{equation}
    \bm{L} 
    \triangleq 2^{1/r} \max_{\bm{D} \in\mathcal{D}_{\mathcal{X}}}\left\|J_{\bm{D}}\right\|_{{r}\to {s}}, 
    \label{eq:LReLU}
    \end{equation}
    and
    \begin{equation}
    \bm{l} \triangleq \max_{\substack{x\in\mathcal{X},\\
    \bm{D} \in\mathcal{D}_{x}}} \max_{k'\ne k} \sup_{\|u\|_{r}=1} \left\{ (\bm{e}_{k'}-\bm{e}_{k})^\top J_{\bm{D}}u \mid u\in\mathrm{rec}({\mathcal{C}_{\bm{D}}}) \right\}.
    \label{eq:lReLU}
    \end{equation}
    where $J_{\bm{D}}$, $\mathcal{C}_{\bm{D}}$, $\mathcal{D}_{\mathcal{X}}$ are defined in Definition~\ref{def:mask} and $\mathrm{rec}({\mathcal{C}_{\bm{D}}})$ is the recession cone of $\mathcal{C}_{\bm{D}}$. Then for any $\epsilon>0$, we have
    \begin{equation}\label{eq:l_to_L}
    \mathbb{E}_{\mathbb{P}_N}[\ell(Z;\theta)] + \bm{l}\epsilon \leq \sup_{\mathbb{P}\colon \mathcal W_{d,1}(\mathbb P,\mathbb{P}_N) \leq \epsilon  } \mathbb{E}_{\mathbb{P}}[\ell(Z;\theta)] \leq  \mathbb{E}_{\mathbb{P}_N}[\ell(Z;\theta)] + \bm{L}\epsilon.
    \end{equation}
    Moreover, let $\bm{D}^{\star}$ be a maximizer of \eqref{eq:LReLU}, $(k'^\star, k^\star) $ be a maximizer of \eqref{eq:lReLU} and $(\bm{e}_{k'^\star}-\bm{e}_{k^\star})$ be the largest increment direction of $J_{\bm{D}^\star}$. If the dual-norm maximizer $\mathcal{M}_r(J_{\bm{D}^{\star}}^\top(\bm{e}_{k'^{\star}}-\bm{e}_{k^{\star}})) \in \mathrm{rec}({\mathcal{C}_{\bm{D}^\star}})$ then $\bm{l}=\bm{L}$.
\end{theorem}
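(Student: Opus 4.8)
The plan is to prove the two-sided bound \ref{eq:l_to_L} by a sandwich argument and then read off $\bm l=\bm L$ under the stated condition. Note that $\bm l\le\bm L$ is immediate from \ref{eq:l_to_L}: dividing $\mathbb{E}_{\mathbb{P}_N}[\ell]+\bm l\epsilon\le\mathbb{E}_{\mathbb{P}_N}[\ell]+\bm L\epsilon$ by $\epsilon$ gives it for every $\epsilon>0$. Hence the real work splits into (i) the upper bound via a Lipschitz certificate, (ii) the lower bound via an explicit feasible distribution, and (iii) showing the stated hypothesis forces the lower construction to realize the operator norm appearing in $\bm L$, giving the reverse inequality $\bm l\ge\bm L$.

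\textbf{Upper bound.} Since $\theta$ is in general position, the chain rule applies on each cell and every generalized Jacobian has the form $J_{\bm D}$; by \citet[Theorem 1]{jordan2020exactly}, $\theta$ is globally Lipschitz from $\|\cdot\|_r$ to $\|\cdot\|_s$ with constant $\max_{\bm D\in\mathcal D_{\mathcal X}}\|J_{\bm D}\|_{r\to s}$. I would then compose with the loss: writing $g=\nabla_z\ell$ (for cross-entropy, $g=\mathrm{softmax}(\theta(x))-\bm e_k$), the chain rule gives $\nabla_x\ell=J_{\bm D}^\top g$, so the loss contributes a multiplicative factor equal to the largest dual norm of $g$ over the probability simplex, which is the source of the constant $2^{1/s}$ (the DLR case follows from the analogous bound on its logit gradient). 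The product of the two Lipschitz constants is exactly $\bm L$, and substituting a globally $\bm L$-Lipschitz loss into the standard $p=1$ certificate \ref{eq:Lip_cert} (Kantorovich--Rubinstein / weak duality) yields the right-hand inequality.

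\textbf{Lower bound.} Here I would exhibit a feasible $\mathbb{P}$ attaining the bound in the limit. Fix the sample $x$, mask $\bm D$, and pair $(k',k)$ achieving $\bm l$, with maximizing unit direction $u\in\mathrm{rec}(\mathcal C_{\bm D})$. Because $u$ lies in the recession cone, the entire ray $x+tu$ for $t\ge0$ stays in $\mathcal C_{\bm D}$, so $\theta(x+tu)=\theta(x)+t\,J_{\bm D}u$ is \emph{exactly} affine and the loss along the ray is an explicit log-sum-exp whose asymptotic slope $\lim_{t\to\infty}\ell(x+tu)/t$ equals $\max_{k'\ne k}(\bm e_{k'}-\bm e_k)^\top J_{\bm D}u=\bm l$. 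I would then transport a vanishing fraction $\lambda$ of the mass at $x$ a distance $R$ along $u$ and keep all else fixed, so the $p=1$ transport cost is $\lambda\mu R$; fixing $\lambda\mu R=\epsilon$ and sending $R\to\infty$ drives the induced loss increment to $\epsilon$ times the asymptotic slope, i.e.\ $\bm l\epsilon$. This is the construction referenced in \eqref{eq:P_adv}, and it proves the left-hand inequality.

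\textbf{Matching and main obstacle.} Under the hypothesis the lower construction saturates $\bm L$: if $(\bm e_{k'^\star}-\bm e_{k^\star})$ is the largest-increment direction of $J_{\bm D^\star}$, then $\|J_{\bm D^\star}\|_{r\to s}$ (up to the simplex factor) is realized by the left direction $\bm e_{k'^\star}-\bm e_{k^\star}$, whose optimal input direction is precisely $u^\star=\mathcal M_r\bigl(J_{\bm D^\star}^\top(\bm e_{k'^\star}-\bm e_{k^\star})\bigr)$, for which $(\bm e_{k'^\star}-\bm e_{k^\star})^\top J_{\bm D^\star}u^\star=\|J_{\bm D^\star}^\top(\bm e_{k'^\star}-\bm e_{k^\star})\|_s$. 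The assumption $u^\star\in\mathrm{rec}(\mathcal C_{\bm D^\star})$ makes $u^\star$ admissible in the supremum defining $\bm l$, so $\bm l$ attains this value, which coincides with $\bm L$; with $\bm l\le\bm L$ this gives $\bm l=\bm L$. I expect the delicate points to be the lower-bound limit and the constant-matching. The lower bound is not attained by any single distribution, so one must argue through the mass-splitting limit that the supremum reaches $\bm l\epsilon$ and verify feasibility in the $\Omega_1$ ball as the radius $R$ diverges; the recession-cone membership is exactly what licenses the unbounded affine ray. For the matching step, the crux is the geometric condition that the operator-norm-optimal left direction of $J_{\bm D^\star}$ has the special form $\bm e_{k'^\star}-\bm e_{k^\star}$ and that its induced input direction survives the recession-cone constraint; reconciling the simplex factor from the softmax/DLR sensitivity with the operator norm so that the lower and upper constants coincide is where the argument is tightest.
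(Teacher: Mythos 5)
Your proposal follows essentially the same route as the paper's proof: the upper bound via composing the exact activation-pattern Lipschitz constant of the logit map with the $2^{1/s}$ sensitivity of the loss and invoking the standard $p=1$ certificate; the lower bound via the mass-splitting two-point distribution \eqref{eq:P_adv} transported along a recession-cone ray with the radius sent to infinity so the affine log-sum-exp slope converges to $\bm l$; and the matching step by checking that the dual-norm maximizer is admissible in the supremum defining $\bm l$ and saturates $\|J_{\bm D^\star}\|_{r\to s}$. The only details you gloss over, which the paper also treats only approximately, are that the ray's base point $x^\star$ need not be a sample (so the transported mass comes from a sample with label $\bm e_{k^\star}$ and the cost is only asymptotically $\alpha$) and that the supremum over the open cone is approached by a sequence rather than attained.
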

\begin{proof}
    To prove inequality \eqref{eq:l_to_L}, we show that the loss function $\ell(\cdot,\theta)$ is $\bm{L}$-Lipschitz, and a direction $u$ found in \eqref{eq:lReLU} induces an admissible attack distribution $\mathbb{P}_{\mathrm{adv}}$ satisfying  $\mathbb{E}_{\mathbb{P}_{\mathrm{adv}}}[\ell(Z;\theta)]\approx \mathbb{E}_{\mathbb{P}_N}[\ell(Z;\theta)] + \bm{l}\epsilon$, $\mathcal W_{d,1}(\mathbb{P}_{\mathrm{adv}},\mathbb{P}_N) \leq \epsilon$. To verify the sufficient condition of $\bm{l}=\bm{L}$, we show that the constructed $\mathbb{P}_{\mathrm{adv}}$ provides $\bm{l}=\bm{L}$. We provide detailed proof in Appendix~\ref{sec:proof-ReLU}.
\end{proof}

In Figure~\ref{fig:convergence}, we illustrate an instance in which our lower and upper bounds match. While \eqref{eq:lReLU} provides a tight lower bound of the WDRO, it is impractical to scan through all $x\in\mathcal{X}$ and its mask $\mathcal{D}_x$. We then introduce a practical lower bound, of which we consider the mask associated with the sample points only.

\begin{corollary}[Practical lower bound]
    Given assumptions and notations used in Theorem~\ref{thm:main-ReLU}, let \(\mathcal{Z}_N = \{(X^{(1)},Y^{(1)}),\dots,(X^{(N)},Y^{(N)})\}\) and  
    \begin{equation}\label{eq:lNReLU}
        \bm{l}_N \triangleq \max_{\substack{(X^{(i)},Y^{(i)})\in\mathcal{Z}_N \\
    \bm{D} \in\mathcal{D}_{x}}} \max_{k} \sup_{\|u\|_{r}=1} \left\{ (\bm{e}_{k}-Y^{(i)})^\top J_{\bm{D}}u \mid u\in\mathrm{int}(\mathrm{rec}({\mathcal{C}_{\bm{D}}})) \right\}.
    \end{equation}    
    Then \(\bm{l}_N \leq \bm{l} \).
\end{corollary}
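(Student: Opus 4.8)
The plan is to prove this entirely by a \textbf{feasible-set containment} argument: $\bm{l}_N$ in \eqref{eq:lNReLU} is the maximum of the \emph{same} bilinear objective that defines $\bm{l}$ in \eqref{eq:lReLU}, but taken over a strictly smaller feasible region. Since enlarging the domain of a maximization can only raise its value, the bound $\bm{l}_N \le \bm{l}$ follows as soon as the objectives are matched and each constraint in \eqref{eq:lNReLU} is recognized as a tightening of the corresponding constraint in \eqref{eq:lReLU}.

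First I would match the objectives. In \eqref{eq:lNReLU} the label is one-hot, $Y^{(i)} = \bm{e}_{k_i}$, so the inner expression is $(\bm{e}_k - \bm{e}_{k_i})^\top J_{\bm{D}} u$, which is exactly the form $(\bm{e}_{k'} - \bm{e}_{k''})^\top J_{\bm{D}} u$ appearing in \eqref{eq:lReLU} under the identification $(k',k'') = (k, k_i)$. Thus both programs optimize identical functionals and differ only in their constraints. Next I would verify the three tightenings. \emph{(i)} The outer maximization in $\bm{l}_N$ ranges only over the sample inputs $X^{(i)} \in \mathcal{X}$ (so $x = X^{(i)}$, and the admissible masks $\bm{D} \in \mathcal{D}_{X^{(i)}}$ are precisely those available to $\bm{l}$ at this $x$); hence it is a submaximum of the maximization over all $x \in \mathcal{X}$ in $\bm{l}$. \emph{(ii)} The index pair is restricted so that its second coordinate is pinned to the true label $k_i$, whereas $\bm{l}$ ranges over all pairs $k' \ne k''$; consequently the $\bm{l}_N$ pairs with $k \ne k_i$ form a subset of those admissible for $\bm{l}$. \emph{(iii)} The feasible cone is shrunk from $\mathrm{rec}(\mathcal{C}_{\bm{D}})$ to $\mathrm{int}(\mathrm{rec}(\mathcal{C}_{\bm{D}})) \subseteq \mathrm{rec}(\mathcal{C}_{\bm{D}})$, tightening the domain of the inner supremum. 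Combining the three, every feasible triple $(X^{(i)}, \bm{D}, u)$ of the $\bm{l}_N$ program with $k \ne k_i$ is feasible in the $\bm{l}$ program and achieves the same value, so all such terms are bounded by $\bm{l}$.

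The only genuine subtlety, and the one point I would treat carefully, is the diagonal index $k = k_i$: it is permitted in $\max_k$ of \eqref{eq:lNReLU} but has \emph{no} counterpart in \eqref{eq:lReLU}, which forbids $k' = k''$. When $k = k_i$ we have $\bm{e}_k - Y^{(i)} = 0$, so this term contributes objective value $0$, giving $\bm{l}_N \le \max(\bm{l}, 0)$. To close the gap I would argue $\bm{l} \ge 0$: for any mask with a nontrivial recession cone containing a unit direction $u_0$, writing $v = J_{\bm{D}}^\top(\bm{e}_{k'} - \bm{e}_{k''})$, the pair $(k',k'')$ and its swap $(k'',k')$ give inner values $\langle v, u_0\rangle$ and $-\langle v, u_0\rangle$; since $\bm{l}$ maximizes over both orderings, $\bm{l} \ge |\langle v, u_0\rangle| \ge 0$. (If instead every relevant cell is bounded, all the suprema are over empty sets and both quantities are $-\infty$, so the inequality holds trivially.) Hence $\bm{l}_N \le \max(\bm{l},0) = \bm{l}$, as claimed. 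I do not anticipate any deeper obstacle: the argument is pure monotonicity of the maximum under domain restriction, and the diagonal bookkeeping is the sole place requiring a line of justification.
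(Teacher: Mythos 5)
Your argument is correct, and it is essentially the feasible-set-containment reasoning the paper implicitly relies on: the corollary is stated without any proof in the paper, so there is nothing to diverge from. Your observation that the index $k=k_i$ is admissible in \eqref{eq:lNReLU} but excluded from \eqref{eq:lReLU} is a genuine edge case the paper glosses over, and your resolution is sound: that term contributes at most $0$, and whenever $\bm{l}_N>-\infty$ some cell at a data point has a unit vector in $\mathrm{int}(\mathrm{rec}(\mathcal{C}_{\bm{D}}))\subseteq\mathrm{rec}(\mathcal{C}_{\bm{D}})$, so the swap $(k',k'')\mapsto(k'',k')$ forces $\bm{l}\ge 0$ and hence $\bm{l}_N\le\max(\bm{l},0)=\bm{l}$. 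The only reading choice worth flagging is that you (reasonably) interpret the subscript $\mathcal{D}_x$ in \eqref{eq:lNReLU} as $\mathcal{D}_{X^{(i)}}$, which is clearly the intended meaning.
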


Based on the proof of our lower bound \eqref{eq:P_adv}, we construct a worst-case distribution by moving mass from a sample along a direction $u$ that maximizes the margin term in~\eqref{eq:lReLU}. In \cref{sec:WDA}, based on formulation \eqref{eq:lNReLU}, we create this construction empirically via the attack distribution \eqref{eq:P_adv_template} by choosing adversarial direction $u^{(i)}$ for each sample $i$ so that it maximizes the first-order increase of the corresponding logit margin.

\subsection{Exact and tractable interpretation of WDRO for smooth activation neural networks}
While ReLU networks require analyses of activation masks, modern architectures often employ smooth activation functions (e.g., GELU \citep{hendrycks2016gaussian}, SiLU \citep{ramachandran2017searching}). In these settings, the WDRO duality connects the worst-case adversarial risk directly to the first-order geometry via the Jacobian of the logit map.

\begin{theorem}[WDRO for Smooth Networks]
\label{thm:main-smooth}
    Given a differentiable network $\theta: \mathbb{R}^n \rightarrow \mathbb{R}^K$ with  $J_\theta(x)\in\mathbb R^{K\times n}$ being its Jacobian, $1/r+1/s=1$ and $\ell$ being the cross-entropy or DLR loss, define
    \[
    \bm{L} \triangleq 2^{1/r} \sup_{x \in{\mathcal{X}}}\left\|J_\theta(x) \right\|_{{r}\to {s}},
    \]
    and 
    \[
    \bm{l} \triangleq \sup_{x \in{\mathcal{X}}}\max_{k'\ne k}\sup_{\|u\|_r=1}  (\bm{e}_{k'}-\bm{e}_{k})^\top J_\theta(x) u.
    \]
    Then for any $\epsilon>0$, 
    \[
    \mathbb{E}_{\mathbb{P}_N}[\ell(Z;\theta)] + \bm{l}\epsilon \leq \sup_{\mathbb{P}\colon \mathcal W_{d,1}(\mathbb P,\mathbb{P}_N) \leq \epsilon } \mathbb{E}_{\mathbb{P}}[\ell(Z;\theta)] \leq \mathbb{E}_{\mathbb{P}_N}[\ell(Z;\theta)] + \bm{L}\epsilon.
    \]
\end{theorem}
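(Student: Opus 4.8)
The plan is to prove the two inequalities separately, mirroring the proof of Theorem~\ref{thm:main-ReLU} but replacing the cell-wise affine structure by plain differentiability together with an asymptotic (scaling) argument. For the upper bound I would first reduce it to a Lipschitz estimate and then invoke the standard certificate \eqref{eq:Lip_cert}: since the ground cost $d$ forces the label coordinate to stay fixed (the $\infty\cdot\bm 1_{\{y'\ne y\}}$ term), it suffices to show that $x\mapsto\ell(x,y;\theta)$ is $\bm L$-Lipschitz in $\|\cdot\|_r$ for each fixed label $y=\bm e_k$. I would factor the loss as $\ell=L\circ\theta$, where $L$ is the softmax cross-entropy (resp. DLR) viewed as a function of the logits. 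The logit map contributes the first factor: because $\theta\in C^1$, the fundamental theorem of calculus along the segment from $x$ to $x'$ gives $\|\theta(x')-\theta(x)\|_s\le\big(\sup_{x}\|\nabla_x\theta(x)\|_{r\to s}\big)\,\|x'-x\|_r$. The loss-in-logits contributes the second factor: for cross-entropy the logit gradient is $\nabla_z L=\softmax(z)-\bm e_k$, a zero-sum vector with $\|\cdot\|_1\le 2$ and $\|\cdot\|_\infty\le 1$, so norm interpolation bounds $\|\softmax(z)-\bm e_k\|_s\le 2^{1/s}$ (the DLR loss is handled identically, its logit gradient being a bounded combination of $\bm e_k$ and a competitor $\bm e_{k'}$). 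Composing the two factors by the chain rule yields the Lipschitz constant $\bm L=2^{1/s}\sup_x\|\nabla_x\theta(x)\|_{r\to s}$, and \eqref{eq:Lip_cert} (weak duality for $p=1$) turns this into the claimed $\mathbb{E}_{\mathbb{P}_N}[\ell]+\bm L\epsilon$ bound.

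For the lower bound I would exhibit a concrete admissible attack and push it to its limit. Fix a triple $(x^\star,k'^\star,k^\star)$ and a unit direction $u^\star=\mathcal{M}_r\big(\nabla_x(\theta_{k'^\star}-\theta_{k^\star})(x^\star)\big)$ approaching the supremum defining $\bm l$, so that $(\bm e_{k'^\star}-\bm e_{k^\star})^\top\nabla_x\theta(x^\star)u^\star\to\bm l$. Moving a vanishing fraction of mass $\mu$ of a single sample a long distance $t=\epsilon/\mu$ into the region of maximal margin growth and then along $u^\star$, I obtain a distribution $\mathbb{P}_{\mathrm{adv}}$ whose transport cost is $\mu t=\epsilon$ (the label stays fixed, so $d$ is finite), hence $\mathbb{P}_{\mathrm{adv}}\in\Omega_1$. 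Its excess risk equals $\mu\,[\ell(\text{destination})-\ell(\text{origin})]$, and using the log-sum-exp lower bound $\ell\ge\theta_{k'^\star}-\theta_{k^\star}$ together with the fact that along the ray the margin grows at a rate approaching $\bm l$, this excess tends to $\bm l\epsilon$ as $\mu\to0$ and $t\to\infty$. Taking the supremum over such constructions gives $\sup_{\Omega_1}\mathbb{E}_{\mathbb{P}}[\ell]\ge\mathbb{E}_{\mathbb{P}_N}[\ell]+\bm l\epsilon$.

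The main obstacle is this lower bound. In the ReLU case the map is exactly affine on a cell, so a recession direction produces margin growth that is linear for all $t$ and the limit is immediate; here $\nabla_x\theta$ varies continuously along the ray, so the instantaneous margin-growth rate need not stay near its supremal value. The delicate step is therefore the scaling argument parametrized by $t$: I must argue that the averaged growth $\tfrac1t\int_0^t(\bm e_{k'^\star}-\bm e_{k^\star})^\top\nabla_x\theta(x(\tau))u^\star\,d\tau$ approaches $\bm l$ (e.g.\ by exploiting saturation of the smooth activations' derivatives so that the Jacobian converges along the ray), and simultaneously that the softmax saturates so that $\ell$ is asymptotically equal to the margin while the origin term and the finite routing cost become negligible relative to $t$. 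Controlling these error terms uniformly is what makes the smooth lower bound harder than its piecewise-affine counterpart; the upper bound, by contrast, is a routine chain-rule-plus-duality computation.
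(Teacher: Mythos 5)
Your upper-bound argument is essentially the paper's: factor $\ell = L\circ\theta$, bound the logit-gradient of the softmax/DLR loss by $2^{1/s}$, bound $\|\theta(x')-\theta(x)\|_s$ by $\sup_x\|\nabla_x\theta(x)\|_{r\to s}\,\|x'-x\|_r$ via the mean-value/FTC argument, and conclude with the $p=1$ Lipschitz certificate. One small caution: Lipschitzness of $L$ with respect to $\|\cdot\|_s$ on the logits requires a bound on the \emph{dual} ($r$-)norm of $\nabla_z L$, not on its $s$-norm, so the interpolation step you cite ($\|\operatorname{softmax}(z)-\bm e_k\|_s\le 2^{1/s}$) is the right inequality for the wrong norm; this is the same index convention the paper adopts, so it is not a divergence from their proof, but it is worth checking which of $2^{1/r}$ or $2^{1/s}$ the composition actually delivers.

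The lower bound is where the genuine gap sits, and you have named it yourself without closing it. The quantity $\bm l$ is an \emph{instantaneous} rate, $\sup_x (\bm e_{k'}-\bm e_k)^\top\nabla_x\theta(x)u$, whereas the vanishing-mass construction only pays off in the regime $\mu\to 0$, $t=\epsilon/\mu\to\infty$, where the excess risk is $\mu\,[\ell(x^\star+tu^\star)-\ell(X^{(\iota)})]$ and what you actually harvest is the \emph{sustained} average rate $\tfrac1t\int_0^t(\bm e_{k'}-\bm e_k)^\top\nabla_x\theta(x^\star+\tau u^\star)u^\star\,d\tau$. For a general differentiable network nothing forces this average to approach the pointwise supremum: take $n=1$, $K=2$, $\theta(x)=(\tanh x,0)$ with true label $k=2$; then $\bm l\ge\operatorname{sech}^2(0)=1$ while the cross-entropy loss is bounded by $\log(1+e)$, so $\mathbb{E}_{\mathbb{P}_N}[\ell]+\bm l\epsilon$ exceeds the worst-case risk for large $\epsilon$ and the claimed inequality fails. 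So the missing step is not a technicality you can "control uniformly"; the statement needs either an extra hypothesis (e.g., the Jacobian converges along the maximizing ray, so that $\theta$ is asymptotically affine and the rate is sustained) or a redefinition of $\bm l$ as an asymptotic growth rate at infinity. You should know that the paper's own proof does not close this gap either: it simply declares the lower bound "identical with the ReLU network case," but the ReLU argument works precisely because $\theta$ is exactly affine on the recession cone of a cell, a structure with no smooth counterpart. Your proposal is therefore at least as rigorous as the published proof on this point, but neither constitutes a complete proof of the lower bound as stated.
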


In addition, this result extends certification guarantees to modern differentiable Transformer architectures such as GPT \citep{radford2019language}, ViT \citep{dosovitskiy2020image}, and Swin Transformer \citep{liu2021swin}, which are often Pre-LN Transformers \cite{xiong2020onlayer}.

\begin{corollary}[WDRO for Pre-LN Transformers]
\label{cor:transformer}
    Given assumptions and notations used in \cref{thm:main-smooth}, if $\theta$ is a depth-$H$ (Pre-LN) Transformer, then by  \citet[Theorem 3.3 and Lemma 3.8]{castin2024smooth}, the upper bound satisfies
    \begin{equation}
    \label{eq:Lbound}
    \bm L
    =2^{1/r}\sup_{x\in\mathcal X}\|J_\theta(x)\|_{r\to s}
    \le 2^{1/r}(nK)^{(\frac12-\frac1r)_+}\times\Lambda_\theta,
    \end{equation}
    where $\Lambda_\theta \triangleq L_{\rm head}L_{\rm emb}\prod_{h=1}^H\left(1+L_{\rm LN}^{(h)}L_{\rm MHA}^{(h)}\right)\left(1+L_{\rm LN'}^{(h)}L_{\rm FFN}^{(h)}\right)$, and all module constants are Euclidean Lipschitz constants on $\mathcal X$. Moreover, their explicit bound of the form $L_{\rm MHA}^{(h)}=\mathcal O(\sqrt n)$ implies $\bm L=\mathcal O\left(2^{1/r}(nK)^{(\frac12-\frac1r)_+}\times n^{H/2}\right)$ up to architecture and parameter-dependent constants.
\end{corollary}

\section{Wasserstein Distributional Attacks}
\label{sec:WDA}

Existing point-wise attacks such as FGSM \citep{goodfellow2014explaining}, DeepFool \citep{moosavidezfooli2016deepfool}, AA \citep{croce2020reliable}, AAA \citep{liu2022practical}, keep the  adversarial distribution supported on exactly $N$ points, where each point $X^{(i)}_{\operatorname{adv}}$ is perturbed to be precisely on the boundary of the $\epsilon$-ball centred at $X^{(i)}$. To address this issue, we propose a novel method called \emph{Wasserstein Distributional Attacks} which constructs an adversarial distribution $\mathbb{P}_{\mathrm{adv}}$, supported on a set of $2N$ label-preserving points
\begin{equation}
\label{eq:P_adv_template}
\mathbb{P}_{\mathrm{adv}}
=\frac{1}{N}\sum_{i=1}^{N}\left(1-\frac{1}{\kappa_i}\right)\bm{\delta}_{(X^{(i)},Y^{(i)})}
+\frac{1}{\kappa_i}\bm{\delta}_{(X^{(i)}_{\mathrm{adv}},Y^{(i)})},
\end{equation}
where $\kappa_i\ge 1$. (See Figure~\ref{fig:inclusion_and_attack}.) This $2N$-support consists of $N$ original empirical samples $X^{(i)}$ and  $N$ corresponding adversarial points $X^{(i)}_{\mathrm{adv}}$, each perturbed using the first-order, margin-aligned directions predicted by \cref{thm:main-ReLU,thm:main-smooth}.

\paragraph{WDA.} WDA corresponds to $\kappa_i\equiv\kappa$. In the special case where $\kappa=1$, the attack reduces to point-wise methods, while $\kappa=2$ yields a uniform distribution over all $2N$ points, with each point receiving a weight of $\frac{1}{2N}$. The mixture \eqref{eq:P_adv_template} lies in $\Omega_p$ whenever 
$\|X^{(i)}_{\mathrm{adv}}-X^{(i)}\|_r \le \kappa^{1/p}\epsilon$ for all $i$
(Corollary~\ref{cor:wda-feasible}) and serves as a constructive and feasible distributional adversary.

\paragraph{WDA++.} WDA++ keeps the same two-point anchored form \eqref{eq:P_adv_template} but chooses the mixing weight adaptively. For each sample $i$ we calculate $d_i$, an estimate of the closest distance to flip an image's label and set $1/\kappa_i = \min\{1,t_i/d_i^p\}$ for a transport allocation $t_i\ge 0$ (with the conventions $1/\kappa_i=1$ if $d_i=0$ and $1/\kappa_i=0$ if $d_i=\infty$), yielding $X^{(i)}_{\rm adv}$ chosen near the closest flip. By Corollary~\ref{cor:wda-feasible}, $\mathcal W_{d,p}(\mathbb P_{\rm adv},\mathbb P_N)^p \le \sum_i \mu_i t_i$, hence $\mathbb P_{\rm adv}\in\Omega_p$ whenever $\sum_i\mu_i t_i\le \epsilon^p$. Full implementation details are deferred to Appendix~\ref{app:wda++}.

\begin{figure}[!htbp]
    \centering
    \includegraphics[width=0.6\columnwidth]{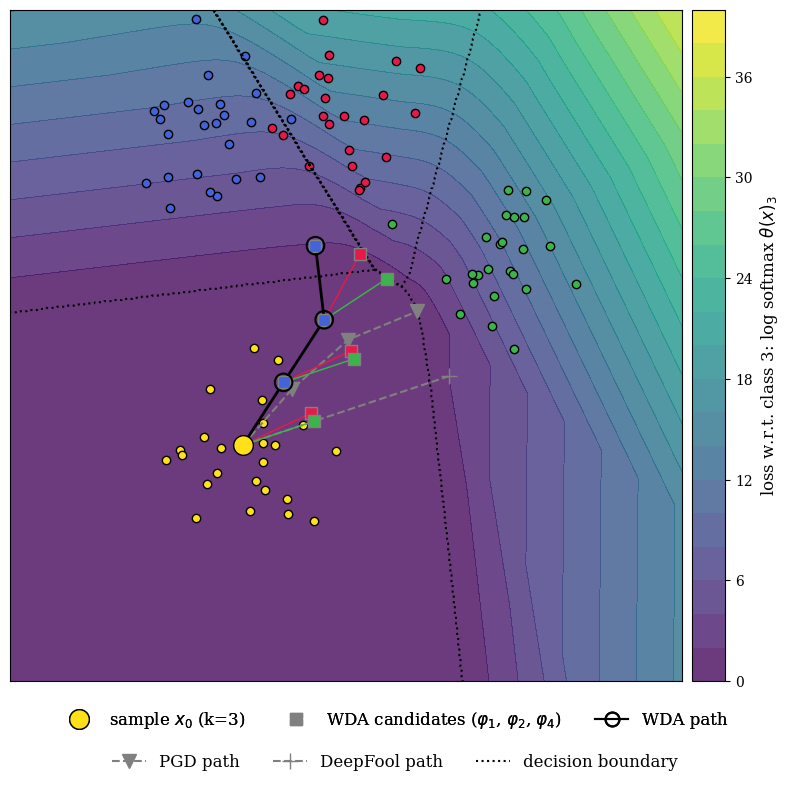}
    \caption{Wasserstein Distributional Attack (WDA, \cref{alg:WDA}) for $r=2$. At each iteration $x_t$, WDA forms $K\!-\!1$ candidates $\varphi_j$ and updates using the one with the largest logit $\theta_j(\varphi_j)$. For reference, PGD follows the dual-norm gradient direction; DeepFool linearizes the decision boundary.}
    \label{fig:WDA}
\end{figure}

\begin{algorithm}[!htbp]
\footnotesize
\caption{Wasserstein Distributional Attack (WDA)}
\label{alg:WDA}
\begin{algorithmic}
\State \textbf{Inputs:} neural network $\theta:\mathbb{R}^n  \to  \mathbb{R}^K$, empirical distribution $\mathbb{P}_N=\frac{1}{N}\sum_{i=1}^{N}\bm{\delta}_{(X^{(i)},Y^{(i)})} $, budget $\epsilon>0$, cost-norm $r\in\{1,2,\infty\}$, Wasserstein order $p\in\{1,2\}$, WDA parameter $\kappa\geq1$, step size $\alpha>0$, and $0<$ \texttt{probe} $\le$ \texttt{maxiter}
\State \textbf{Outputs:} Wasserstein distributional attack $\mathbb{P}_{\mathrm{adv}}$ such that $\mathcal{W}_{d,p}(\mathbb{P}_{\mathrm{adv}},\mathbb{P}_N)\leq \epsilon $ where $d((x',y'),(x,y)) = \|x'-x\|_{r} + \infty \cdot\bm{1}_{\{y'\ne y\}} $
\State \textbf{Initialize:}  
dual-norm maximizer $\mathcal{M}$ \eqref{eq:Mr}, projection  $\Pi $ \eqref{eq:Pir}
\For{ $i=1$ to $N$}
\State $x_0  \leftarrow X^{(i)} $, $\bm{e}_k\leftarrow Y^{(i)}$ for some $k=1,\dots,K$

\For{iter $=0$ to \texttt{maxiter}}
    \State \textbf{if} iter $<$ \texttt{probe} \textbf{then} $\mathcal{J} = \{1,\dots,K\}\setminus \{k\}$ \textbf{else} $\mathcal{J} = \{j^*\}$
    \State $g_j \gets \nabla_x  \theta \left(x_{\mathrm{iter}}\right) ^\top (\bm{e}_j - \bm{e}_k)$ for $j\in\mathcal{J}$
    \State $u_j \gets \mathcal{M}_r(g_j)$ for $j\in\mathcal{J}$
    \State $\varphi_{j} \gets \Pi_{r,X^{(i)},\kappa^{1/p}\epsilon}  \left(x_{\mathrm{iter}} + \alpha u_{j}\right)$ for $j\in\mathcal{J}$
    \State $j^* \gets \arg\max_{j\in\mathcal{J}_i}\left(\theta_j(\varphi_j) - \theta_k(\varphi_j)\right)$
    \State $x_{\mathrm{iter+1}} \gets \varphi_{j^*} $    
\EndFor
    \State $X^{(i)}_{\mathrm{adv}} \gets x_{\mathrm{\texttt{maxiter}}}$
\EndFor
\State $\mathbb{P}_{\mathrm{adv}} \gets \frac{1}{N} \sum_{i=1}^{N} \left(1-\frac{1}{\kappa}\right)  \bm{\delta}_{(X^{(i)},Y^{(i)})} + \frac{1}{\kappa}\bm{\delta}_{(X_{\mathrm{adv}}^{(i)},Y^{(i)})}  $
\State \textbf{return}   $\mathbb{P}_{\mathrm{adv}}$
\end{algorithmic}
\end{algorithm}

\paragraph{Adversarial direction.} Define the (sub)gradient $g_j(x)\in\partial_x\big(\theta_j-\theta_k\big)(x)$. Then $\mathcal{M}_r(g_j)$ provides the per-iteration, first-order instance of the ray ascent from \cref{thm:main-ReLU,thm:main-smooth}: within a ReLU cell or for smooth activations, moving along $u_j=\mathcal{M}_r(g_j)$ increases the gap at rate $\|g_j(x)\|_s$. During an initial probing phase, we evaluate all rivals $j\neq k$ using these first-order steps. At the end of this phase, we fix a single rival $j^*$ based on the logit magnitude and continue the remaining iterations. Fixating $j^*$ prevents the update from oscillating across classes or chasing locally steep but globally suboptimal directions. Finally, we project each step to the ball of radius $\kappa^{1/p}\varepsilon$ around the anchor $X^{(i)}$. A visualization of our algorithm is shown in Figure~\ref{fig:WDA}. The procedures for implementing WDA and WDA++ are presented in \cref{alg:WDA,alg:WDA++}.

\section{Related Work}

\paragraph{Robustness certificates.} 
Early scalable global certificates control the Lipschitz constant by multiplying per-layer operator norms, which is fast to compute yet data-agnostic and typically loose on deep nets \citep{virmaux2018lipschitz}. For ReLU networks, local (activation-aware) methods exploit piecewise linearity to produce much tighter, input-conditioned certificates on individual activation regions \citep{katz2017reluplex, ehlers2017formal, weng2018towards, singh2018fast, shi2022efficiently}. Most relevant to exact local Lipschitzness, \citet{jordan2020exactly} showed that for a broad class of ReLU networks in general position, the local Lipschitz constant can be computed exactly by optimizing over activation patterns.

\paragraph{Adversarial attacks.} 
Adversarial Attack methods seek for perturbation $x'$ formed by adding a small, human-imperceptible perturbation to a clean input $x$ that causes misclassification \citep{szegedy2014intriguing}. The threat model specifies the attacker’s knowledge (white-box vs.\ black-box), the admissible perturbation set (e.g., $r_2$ balls with budget $\epsilon$), and the objective (e.g., worst-case loss within the ball). Canonical white-box methods include FGSM \citep{goodfellow2014explaining}, multistep PGD \citep{madry2018towards}, CW \citep{carlini2017towards}, and gradient-based margin attacks such as DeepFool \citep{moosavidezfooli2016deepfool}. In the context of distributional threats, \citet{bai2023wasserstein} introduced W-PGD which transports the empirical measure while maintaining its support size. This effectively restricts the attack to a point-wise one. Our work adopts a different perspective by expanding the support to $2N$ points, thereby exploring a larger region of the ambiguity set. Decision-based and score-free attacks (black-box) include Boundary Attack \citep{brendel2021decision} and Square Attack \citep{andriushchenko2020square}.

Robust evaluation is subtle: gradient masking can inflate apparent robustness if attacks are not adapted \citep{athalye2018obfuscated}. To standardize evaluation, AutoAttack (AA) \citep{croce2020reliable} composes strong, parameter-free attacks (APGD-CE, APGD-DLR, FAB, Square) and is widely adopted for reporting robust accuracy. RobustBench \citep{croce2robustbench} curates model zoos and standardized test protocols across datasets and $r_p$ threat models, enabling comparable and reproducible robustness claims. \cite{liu2022practical} proposed Adaptive Auto Attack (A$^3$), which incorporates Adaptive Direction Initialization (ADI) and Online Statistics-based Discarding (ODS) \citep{tashiro2020diversity} with a PGD framework to enhance attack efficiency. In our experiments, we report robustness under AA and A$^3$ following RobustBench conventions and use them as baselines for comparison.

Several works have focused on adversarial attacks tailored to ReLU networks. \cite{croce2018randomized} introduced rLR-QP, a gradient-free method that navigates the piecewise-linear regions of ReLU models by solving convex subproblems and enhancing exploration with randomization and local search. More recently, \cite{zhang2022branch} developed BaB-Attack, a branch-and-bound framework that operates in activation space, leveraging bound propagation, beam search, and large neighborhood search to uncover stronger adversarial examples than conventional gradient-based approaches, particularly on hard-to-attack inputs. As pointed out in \cite{zhang2022branch, croce2020scaling}, rLR-QP and BaB-Attack are not as efficient as gradient based attack, therefore, for single method ($\Omega_\infty$) baseline we only use APGD.

\section{Comparison with Existing Baselines}
\label{sec:experiments}

\paragraph{Experimental setup.} 
We evaluate the effectiveness of WDA and WDA++ in amplifying misclassification rates against state-of-the-art defences on CIFAR-10, CIFAR-100, and ImageNet. We report robustness under $r_{\infty}$ and $r_2$ perturbations with budgets $\epsilon \in \{4/255, 8/255, 0.5\}$. Defence models and pre-trained weights are sourced from RobustBench \citep{croce2robustbench}. Our benchmarks include single-method baselines (APGD-CE, APGD-DLR, W-PGD) and ensemble methods (AA, A$^3$).

For baselines, we copy the official default configurations. 

For WDA, we use probe steps $\texttt{probe} = 10$, attack iterations $\texttt{maxiter} = 20$ and step size $\alpha=0.2$ for $r_2,\epsilon=0.5$, $\alpha\in\{0.01, 0.02\}$ for $r_\infty,\epsilon\in\{4/255, 8/255\}$ respectively. We further propose A$^3$-WDA, an extension that substitutes the PGD attack of A$^3$ with our WDA attack. In addition to standard point-wise evaluation with $\kappa=1$, we validate the distributional threat model using $\kappa=2$. We report the classification accuracy on the constructed distribution $\mathbb{P}_{\mathrm{adv}}$ as $(1-1/\kappa)\times \mathrm{acc_{clean}} + (1/\kappa) \times \mathrm{acc_{adv}}$. Comprehensive ablation studies are detailed in Appendix~\ref{app:wda}.

For WDA++, we evaluate under the constrained $\Omega_2$ set (comparable to W-PGD) and the full Wasserstein ambiguity ball $\Omega_1$. We adopt the parameter settings of WDA and detail the implementation in Appendix~\ref{app:wda++}.

All experiments are conducted on 2x NVIDIA GeForce RTX 4090 and 1x NVIDIA H200 GPUs.

\paragraph{Results on CIFAR-10.} Table~\ref{tab:main-comparison} presents performance on CIFAR-10. Under both $r_\infty$ and $r_2$ threat models, WDA++ ($\Omega_1$) significantly reduces robust accuracy compared to all other methods. This confirms that standard robustness benchmarks significantly overestimate true robustness against distributional shifts. Moreover, expanding the ambiguity set to the full Wasserstein ball ($\Omega_1$) exposes vulnerabilities that both distributional adversaries under $\Omega_2$ and point-wise attacks under $\Omega_\infty$ fail to detect. Within the $\Omega_2$ constraint, WDA++ produces lower robust accuracy compared to W-PGD, confirming that our attack finds better directions in the loss landscape and utilises the Wasserstein budget $\epsilon$ more efficiently. Within the point-wise setting ($\kappa=1$), WDA remains competitive, outperforming single-method attacks (APGD-CE and APGD-DLR) and achieves results comparable to ensemble-based methods, indicating its ability to match the strength of more computationally demanding attack aggregations. When integrated into the ensemble framework (A$^3$-WDA), our method consistently achieves the lowest robust accuracy, surpassing the standard A$^3$ and AA benchmarks.

\begin{table}[!htbp]
\centering
\caption{Comparison of robust accuracy of WDA and baseline methods against various defences on CIFAR-10, CIFAR-100 and ImageNet. The best (lowest) robust accuracy of single methods and ensemble methods are highlighted in \underline{underline} and \textbf{bold}, respectively.}
\label{tab:main-comparison}
\begin{scshape}
\begin{adjustbox}{width=\textwidth}
\begin{tabular}{l c c c c c c c c c c c c c}
\toprule
\multirow{5}{*}{\textbf{Paper}} & 
\multirow{5}{*}{\textbf{Model}} & 
\multirow{5}{*}{\textbf{Clean}} & 
\multicolumn{8}{c}{\textbf{Single method}} & 
\multicolumn{3}{c}{\textbf{Ensemble method}} \\
\cmidrule(lr){4-11} \cmidrule(lr){12-14}
& & &
\multicolumn{2}{c}{$\Omega_{1}$} & 
\multicolumn{3}{c}{$\Omega_{2}$} & 
\multicolumn{3}{c}{$\Omega_{\infty}$} & 
\multicolumn{3}{c}{$\Omega_{\infty}$} \\
\cmidrule(lr){4-5} \cmidrule(lr){6-8} \cmidrule(lr){9-11} \cmidrule(lr){12-14}
& & &
\makecell[b]{\textbf{WDA}\\$\kappa=2$} &
\makecell[b]{\textbf{WDA++}\\~} &
\makecell[b]{\textbf{W-PGD}\\~} &
\makecell[b]{\textbf{WDA}\\$\kappa=2$} &
\makecell[b]{\textbf{WDA++}\\~} &
\makecell[b]{\textbf{APGD}\\CE} &
\makecell[b]{\textbf{APGD}\\DLR} &
\makecell[b]{\textbf{WDA}\\$\kappa=1$} &
\makecell[b]{\textbf{AA}\\~} &
\makecell[b]{\textbf{A$^3$}\\PGD} &
\makecell[b]{\textbf{A$^3$}\\WDA} \\
\midrule
\multicolumn{13}{c}{\textbf{CIFAR-10} -- $r_\infty$, $\epsilon = 8/255$} \\
\midrule
\citealt{bartoldson2024adversarial} & WRN-94-16 & 93.68 & 65.25 & \underline{32.90} & 58.71 & 77.09 & 46.46 & 76.15 & 74.31 & 74.05 & 73.71 & 73.55 & \textbf{73.54} \\
\citealt{bartoldson2024adversarial} & WRN-82-8 & 93.11 & 62.06 & \underline{27.79} & 56.81 & 75.33 & 41.60 & 74.17 & 72.54 & 71.85 & 71.59 & 71.46 & \textbf{71.46} \\
\citealt{cui2024decoupled} & WRN-28-10 & 92.16 & 60.01 & \underline{26.39} & 50.97 & 72.61 & 39.18 & 70.60 & 68.62 & 68.07 & 67.73 & 67.58 & \textbf{67.57} \\
\citealt{wang2023better} & WRN-70-16 & 93.25 & 63.08 & \underline{28.60} & 55.17 & 74.80 & 41.80 & 73.46 & 71.68 & 71.02 & 70.69 & 70.53 & \textbf{70.52} \\
\citealt{wang2023better} & WRN-28-10 & 92.44 & 60.96 & \underline{23.81} & 52.03 & 72.67 & 36.91 & 70.24 & 68.24 & 67.60 & 67.31 & 67.17 & \textbf{67.17} \\
\citealt{xuexploring} & WRN-28-10 & 93.69 & 63.25 & \underline{36.27} & 52.48 & 73.98 & 46.04 & 67.08 & 69.00 & 66.39 & 63.89 & 63.93 & \textbf{63.84} \\
\citealt{sehwagrobust} & RN-18 & 84.59 & 54.65 & \underline{15.23} & 46.58 & 63.86 & 27.38 & 58.40 & 57.66 & 56.30 & 55.54 & 55.50 & \textbf{55.50} \\
\midrule
\multicolumn{13}{c}{\textbf{CIFAR-10} -- $r_2$, $\epsilon = 0.5$} \\
\midrule
\citealt{wang2023better} & WRN-70-16 & 95.54 & 77.63 & \underline{55.57} & 70.09 & 86.37 & 66.73 & 85.66 & 85.30 & 85.00 & 84.97 & \textbf{84.96} & 84.97 \\
\citealt{wang2023better} & WRN-28-10 & 95.16 & 76.31 & \underline{57.42} & 68.65 & 85.33 & 68.10 & 84.52 & 83.89 & 83.71 & 83.68 & 83.68 & \textbf{83.68} \\
\citealt{sehwagrobust} & WRN-34-10 & 90.93 & 72.01 & \underline{39.25} & 67.96 & 79.87 & 53.13 & 78.23 & 78.16 & 77.51 & 77.24 & \textbf{77.22} & 77.25 \\
\citealt{sehwagrobust} & RN-18 & 89.76 & 69.75 & \underline{48.40} & 65.02 & 77.68 & 58.64 & 75.24 & 75.32 & 74.69 & 74.41 & 74.41 & \textbf{74.40} \\
\citealt{dingmma} & WRN-28-4 & 88.02 & 63.04 & \underline{31.35} & 53.47 & 71.61 & 42.89 & 66.62 & 66.62 & 66.22 & 66.09 & \textbf{66.05} & 66.06 \\
\citealt{rony2019decoupling} & WRN-28-10 & 89.05 & 64.19 & \underline{27.79} & 51.15 & 72.30 & 39.94 & 66.58 & 67.08 & 66.59 & 66.44 & \textbf{66.41} & 66.42 \\
\midrule
\multicolumn{13}{c}{\textbf{CIFAR-100} -- $r_\infty$, $\epsilon = 8/255$} \\
\midrule
\citealt{cui2024decoupled} & WRN-28-10 & 73.85 & 43.68 & \underline{6.71} & 24.64 & 50.42 & 10.85 & 43.82 & 40.37 & 39.57 & 39.18 & 39.17 & \textbf{39.14} \\
\citealt{wang2023better} & WRN-28-10 & 72.58 & 43.61 & \underline{7.45} & 24.90 & 49.81 & 10.88 & 44.09 & 39.66 & 39.12 & 38.77 & \textbf{38.70} & 38.71 \\
\citealt{addepalli2022efficient} & RN-18 & 65.45 & 37.64 & \underline{3.07} & 14.52 & 42.01 & 7.80 & 33.47 & 28.82 & 28.26 & 27.67 & 27.65 & \textbf{27.63} \\
\midrule
\multicolumn{13}{c}{\textbf{ImageNet} -- $r_\infty$, $\epsilon = 4/255$} \\
\midrule
\citealt{xu2025mimir} & Swin-L & 78.62 & 54.86 & \underline{17.96} & 48.68 & 62.09 & 30.96 & 59.96 & 60.30 & 58.00 & 59.68 & 58.32 & \textbf{58.30} \\
\citealt{amini2025meansparse} & MeanSparse Swin-L & 78.80 & 59.83 & \underline{25.25} & 54.56 & 65.28 & 38.35 & 62.12 & 62.54 & 61.14 & 58.92 & 58.76 & \textbf{58.76} \\
\citealt{liu2025comprehensive} & ConvNeXt-B & 76.02 & 52.95 & \underline{16.23} & 45.84 & 59.79 & 28.46 & 55.90 & 56.78 & 54.38 & 55.82 & 53.19 & \textbf{53.18} \\
\citealt{singh2023revisiting} & CvNeXt-S-CvSt & 74.10 & 50.31 & \underline{12.06} & 41.40 & 56.35 & 23.97 & 52.82 & 53.20 & 51.04 & 52.42 & 49.92 & \textbf{49.90} \\
\bottomrule
\end{tabular}
\end{adjustbox}
\end{scshape}
\end{table}

\paragraph{Results on CIFAR-100 and ImageNet.} 
The dominance of WDA++ generalizes to larger-scale datasets. On CIFAR-100, while point-wise WDA ($\kappa=1$) remains competitive against APGD variants, WDA++ ($\Omega_1$) proves most effective, yielding the lowest single-method robustness. This trend continues on ImageNet, where WDA++ ($\Omega_1$) outperforms all baselines, demonstrating that the full distributional threat model offers the strongest evaluation of model robustness across architectures. Finally, the ensemble A$^3$-WDA consistently sets the state-of-the-art benchmark among ensemble methods.

\paragraph{Runtime comparison.} 
\cref{fig:runtime} benchmarks the scalability of the WDA framework against PGD baselines on ImageNet, exhibiting a linear relationship between runtime and sample size across all methods. WDA variants consistently maintain comparable or lower computational costs than their PGD counterparts.

\paragraph{Epsilon effect.} 
\cref{fig:epsilon} demonstrates that WDA++ degrades performance faster than W-PGD as $\epsilon$ increases. This stems from W-PGD’s suboptimal use of global projection, which uniformly rescales perturbations and allows high-cost samples to dilute the budget. In contrast, WDA++ employs a searching method that prioritizes cheaper samples, saving budget from hard-to-attack inputs to facilitate larger, necessary perturbations on vulnerable ones.

\begin{figure}[!htbp]
     \centering
     \begin{subfigure}[b]{0.49\linewidth}
         \centering
         \includegraphics[width=\linewidth]{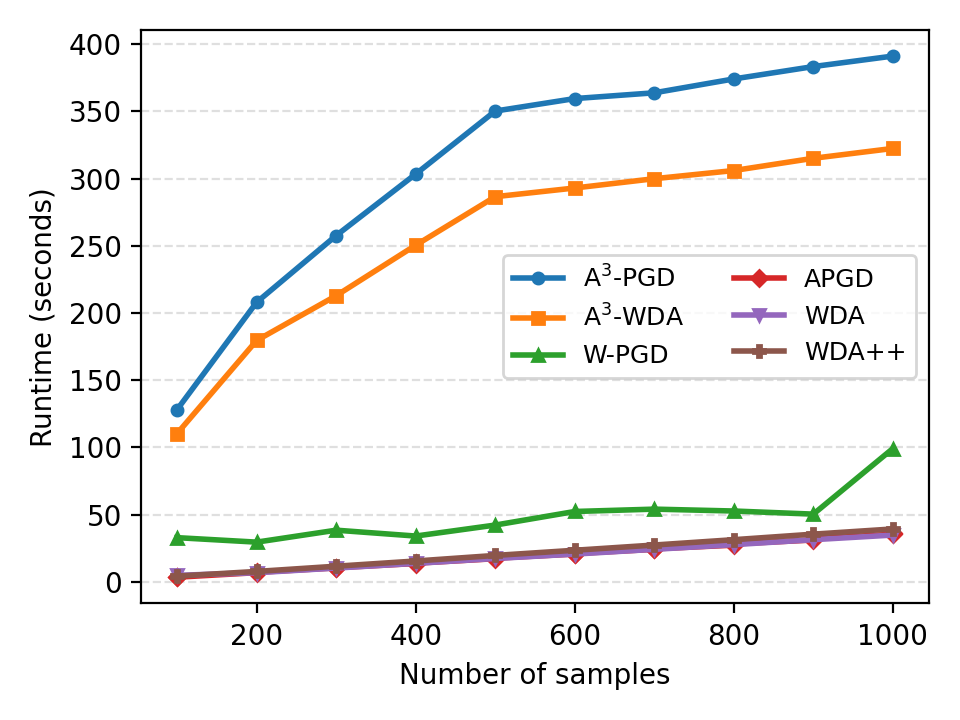}
         \caption{Runtime comparison of different algorithms on the ImageNet dataset.}
         \label{fig:runtime}
     \end{subfigure}
     \hfill 
     \begin{subfigure}[b]{0.49\linewidth}
         \centering
         \includegraphics[width=\linewidth]{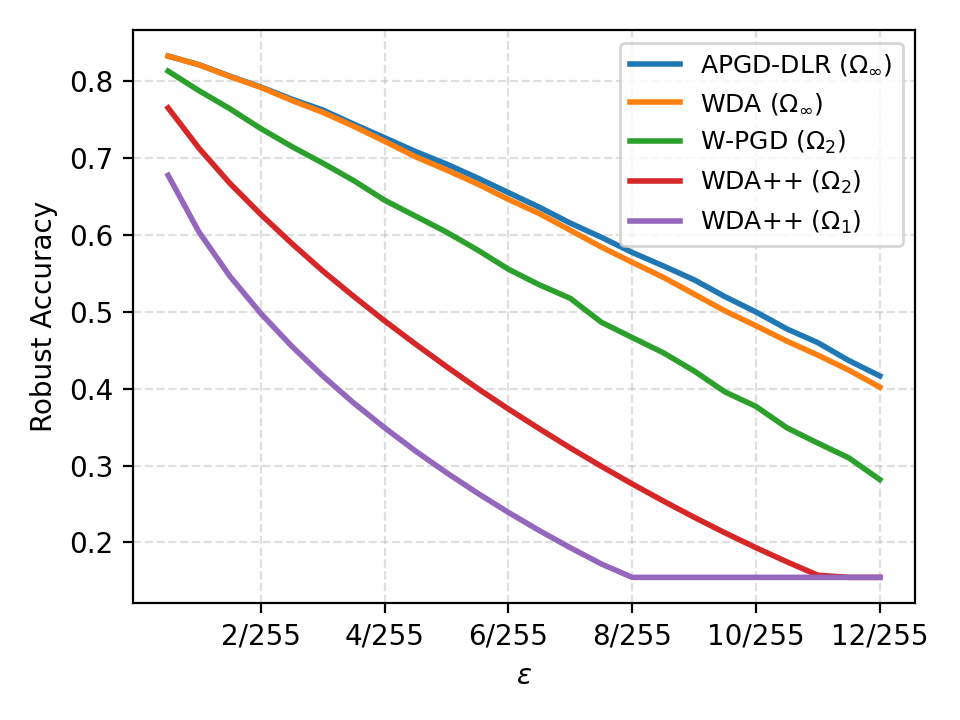}
         \caption{Robust accuracy of \citet{sehwagrobust}'s ResNet-18 on CIFAR-10 under increasing $\epsilon$.}
         \label{fig:epsilon}
     \end{subfigure}
\end{figure}

\section{Conclusions}

We presented tight robustness certificates and stronger adversarial attacks for deep neural networks by exploiting their local geometric structure. For ReLU networks, we derived exact WDRO bounds using their piecewise-affine property, computing data-dependent Lipschitz constants from activation patterns that significantly tighten existing global bounds. For networks with smooth activations (e.g., GELU, SiLU), we characterized the worst-case loss through asymptotic Jacobian behavior along adversarial rays, providing the first tractable WDRO analysis for these modern architectures. Our Wasserstein Distributional Attacks (WDA, WDA++) construct adversarial distributions on $2N$ points rather than restricting to $N$ perturbed points, achieving lower robust accuracy than state-of-the-art methods across CIFAR-10/100 and ImageNet benchmarks. While WDA and WDA++ incur additional computational overhead compared to single-point attacks due to evaluating multiple candidate perturbations per iteration, it demonstrates that existing robustness evaluations significantly underestimate vulnerability by considering only point-wise perturbations. Together, these contributions narrow the gap between theoretical certificates and practical evaluation, revealing that both tighter bounds and stronger attacks emerge from properly leveraging network geometry and distributional perspectives.

\bibliography{main}
\bibliographystyle{bibtex_style}
\newpage
\appendix

\section{Preliminaries on WDRO}
\label{app:prelim}

Recall that  given two probability measures $\mathbb P$ and $\mathbb Q$ on $\mathcal Z$, the Wasserstein distance is defined as 
\[\mathcal W_{d,p}(\mathbb P,\mathbb Q) \triangleq \left(
\inf_{\pi\in\Pi(\mathbb P,\mathbb Q)}
\int_{\mathcal Z\times\mathcal Z} d^p(z',z)\,\mathrm d\pi(z',z) \right)^{1/p} \]
for $ p\in [1,\infty)$, and $\mathcal W_{d,p}(\mathbb P,\mathbb Q) \triangleq \inf_{\pi \in \Pi(\mathbb{P},\mathbb{Q})}\operatorname{ess.sup}_{{\pi}}(d)$ for $p=\infty$, where the feasible set is given by
\[
  \Pi(\mathbb P,\mathbb Q)
  \triangleq
  \bigl\{
    \pi \text{ on } \mathcal Z \times \mathcal Z
    \mid \pi(A \times \mathcal Z) = \mathbb P(A),\;
      \pi(\mathcal Z \times B) = \mathbb Q(B)
      \ \forall A,B \subseteq \mathcal Z
  \bigr\}
\]
the set of couplings (transport plans) between $\mathbb P$ and
$\mathbb Q$. Intuitively, a transportation plan $\pi$ is feasible if it is a joint distribution whose first marginal is $\mathbb{P}$ and second marginal $\mathbb{Q}$. In the ambiguity set $\Omega_p = \left\{\mathbb{P}\mid \mathcal W_{d,p}(\mathbb P,\mathbb{P}_N) \leq \epsilon \right\}$ of problem \eqref{eq:min-max}, a (distributional) attack $\mathbb{P}$ is admissible if the minimal effort for moving mass from $\mathbb{P}$ to the empirical distribution $\mathbb{P}_N$ is not exceeding budget $\epsilon$. Unlike traditional approaches which only allows point-wise perturbations, WDRO min-max model \eqref{eq:min-max} allows both discrete and continuous distribution $\mathbb{P}$, which is extremely practical in certain scenarios where the ground-truth distribution $\mathbb{P}_{\rm true}$ is unknown and possibly continuous.

\section{Proofs of Main Results}

\subsection{Proof of Theorem~\ref{thm:main-ReLU}}\label{sec:proof-ReLU}

\paragraph{Proof of Upper Bound.} 
It is a standard result that for any $y=\bm{e}_k$, if $\ell$ is the cross-entropy loss then  
\begin{equation}
    \begin{array}{ll}
        |\ell(x',y;\theta) - \ell(x,y;\theta)| &= \left|\log \left(\operatorname{softmax} \theta(x') \right)_k - \log \left(\operatorname{softmax} \theta(x) \right)_k \right| \\
        & \leq 2^{1/r} \|\theta(x')-\theta(x)\|_{s},
    \end{array}
\end{equation}
or if $\ell$ is the DLR loss then 
\begin{equation}
    \begin{array}{ll}
        |\ell(x',y;\theta) - \ell(x,y;\theta)| &= \left|\left( \max_{k_1\ne k} \theta(x')_{k_1}-\theta(x')_{k}\right) - \left(\max_{k_2\ne k} \theta(x)_{k_2}-\theta(x)_{k} \right) \right| \\
        & \leq 2^{1/r} \|\theta(x')-\theta(x)\|_{s}.
    \end{array}
\end{equation}
In addition, by \citet{jordan2020exactly}, we have that for any $x',x\in\mathcal{X}$,
\begin{equation}
    \|\theta(x')-\theta(x)\|_{s} \leq \max_{\bm{D} \in\mathcal{D}_{\mathcal{X}}}\left\|J_{\bm{D}}\right\|_{{r}\to {s}} \times \|x'-x\|_r.
\end{equation}
Thus, 
\begin{equation}
    \begin{array}{ll}
        |\ell(x',y;\theta) - \ell(x,y;\theta)|  &\leq 2^{1/r}\max_{\bm{D} \in\mathcal{D}_{\mathcal{X}}}\left\|J_{\bm{D}}\right\|_{{r}\to {s}} \times \|x'-x\|_r\\
        & = \bm{L} \times d((x',y),(x,y)).
    \end{array}
\end{equation}
for any $x',x\in\mathcal{X}$ and therefore by using Lipschitz certificate \citep{esfahani2018,blanchet2019,gao2023,gao2024,chu2024}, we have 
\begin{equation}
    \sup_{\mathbb{P}\colon \mathcal W_{d,1}(\mathbb P,\mathbb{P}_N) \leq \epsilon  } \mathbb{E}_{\mathbb{P}}[\ell(Z;\theta)] \leq  \mathbb{E}_{\mathbb{P}_N}[\ell(Z;\theta)] + \bm{L}\epsilon,
\end{equation}
for any $\epsilon>0$. 

\paragraph{Proof of Lower Bound.} 
To show that the lower bound of the worst-case loss is $\mathbb{E}_{\mathbb{P}_N}[\ell(Z;\theta)] + \bm{l}\epsilon$, it is enough to  construct a perturbation $\tilde{Z}$, a weight $\eta\in(0,1]$, and a distribution
\begin{equation}\label{eq:P_adv}
    \mathbb{P}_{\mathrm{adv}} = \sum_{i=1, i\ne\iota }^{N} \frac{1}{N}\bm{\delta}_{Z^{(i)}} + \frac{1-\eta}{N}\bm{\delta}_{Z^{(\iota)}} + \frac{\eta}{N}\bm{\delta}_{\tilde{Z}},
\end{equation}
so that $\mathcal W_{d,1}(\mathbb{P}_{\mathrm{adv}},\mathbb{P}_N) \leq \epsilon $ and  $\mathbb{E}_{\mathbb{P}_{\mathrm{adv}}}[\ell(Z;\theta)] \approx \mathbb{E}_{\mathbb{P}_N}[\ell(Z;\theta)] + \bm{l}\epsilon $.  

Since $\mathcal{D}_\mathcal{X}$ is finite,  let $ x^{\star}, \bm{D}^{\star}, k'^{\star}, k^{\star} $ and sequence $ \{u^{\star}_{t}\}$ be the maximizer in \eqref{eq:lReLU}, i.e., $\bm{D}^\star\in\mathcal{D}_{x^\star}, k'^{\star}\ne  k^{\star},  \{u^{\star}_{t}\} \subset \operatorname{int}(\operatorname{rec}(\mathcal{C}_{\bm{D}^\star}))$ and 
\begin{equation*}
    (\bm{e}_{k'^\star}-\bm{e}_{k^\star})^\top J_{\bm{D}^\star}u^\star_t \rightarrow \bm{l} \text{ when } t\rightarrow \infty.
\end{equation*}
    
In particular, $\theta$ is affine and differentiable on $\operatorname{rec}(\mathcal{C}_{\bm{D}^\star})$.  
Since $u^{\star}_{t}$ belongs to the open cone $\operatorname{rec}(\mathcal{C}_{\bm{D}^\star}) $, one has that for any $\alpha>0 $,
\begin{equation}
    \tilde{x} = x^{\star} + \alpha  u^{\star}_{t} \in \operatorname{rec}(\mathcal{C}_{\bm{D}^\star}), \end{equation}
 and thus
\begin{equation}
    \nabla_x \theta(\tilde{x}) = J_{\bm{D^\star}}, \quad  \theta(\tilde{x}) - \theta(x^\star)  =   \alpha J_{\bm{D^\star}} u^\star_t.
\end{equation}    
Choose root $\iota$ so that  $( X^{(\iota)}, Y^{(\iota)} = \bm{e}_{k^{\star}} ) $.  Then when $\ell$ is the cross-entropy loss or DLR loss, by a technical Lemma~\ref{lem:limit_Delta_ell} one has
\begin{equation}\label{eq:limit_Delta_ell}
    \lim_{\alpha \to \infty} \frac{\Delta \ell(\alpha)}{\alpha} = \lim_{\alpha \to \infty} \frac{\ell(\tilde{x},Y^{(\iota)};\theta) - \ell(x^{\star}, Y^{(\iota)};\theta) }{\alpha}    \geq v_{k'^\star} - v_{k^\star}.
\end{equation}
where  $v =J_{\bm{D}^\star}u^\star_t $.
Now choose $\alpha$ large enough so that $\Delta \ell(\alpha)\approx \alpha (v_{k'^\star} - v_{k^\star})$, $ \Delta \ell(\alpha) \gg \ell(x^\star,Y^{(\iota)};\theta) - \ell(X^{(\iota)}, Y^{(\iota)};\theta) $, and $N\epsilon < \|\tilde{x} - X^{(i)} \|_r\approx \alpha $. Set $\tilde{Z} =(\tilde{x},Y^{(\iota)})  $, then 
\begin{equation}
\begin{array}{rl}
    \ell(\tilde{Z};\theta) - \ell(Z^{(\iota)};\theta) & =  \Delta \ell(\alpha) + \ell(x^{\star},Y^{(\iota)};\theta) - \ell(X^{(\iota)}, Y^{(\iota)};\theta)\\
    & \approx \| \tilde{x} - X^{(i)} \|_r (v_{k'^\star} - v_{k^\star}) \\
    & = d(\tilde{Z},Z^{(\iota)} )  \times (\bm{e}_{k'^\star}-\bm{e}_{k^\star})^\top J_{\bm{D}^\star}u^\star_t \xrightarrow{t\rightarrow\infty} \bm{l} \times  d(\tilde{Z},Z^{(\iota)} ).
\end{array}         
\end{equation}
Now set $\eta = \frac{N\epsilon}{d(\tilde{Z},Z^{(\iota)} )} \in (0,1]$, then 
\begin{equation}
    \mathcal{W}_{d,1}(\mathbb{P}_{\mathrm{adv}},\mathbb{P}_N) \leq \frac{\eta}{N} d(\tilde{Z},Z^{(\iota)} ) = \epsilon.
\end{equation}
Moreover,
\begin{equation}
    \begin{array}{rl}
        \mathbb{E}_{\mathbb{P}_{\mathrm{adv}}}[\ell(Z;\theta)] &=  \mathbb{E}_{\mathbb{P}_N}[\ell(Z;\theta)] + \frac{\eta}{N} \left(\ell(\tilde{Z};\theta) - \ell(Z^{(\iota)};\theta)\right)\\
        & \approx \mathbb{E}_{\mathbb{P}_N}[\ell(Z;\theta)] + \frac{\epsilon}{d(\tilde{Z},Z^{(\iota)} )} \bm{l}   d(\tilde{Z},Z^{(\iota)} )\\
        &=\mathbb{E}_{\mathbb{P}_N}[\ell(Z;\theta)] +  \bm{l}  \epsilon.
    \end{array}
\end{equation}
Therefore, the lower bound of the worst-case loss is $\mathbb{E}_{\mathbb{P}_N}[\ell(Z;\theta)] +  \bm{l}\epsilon$.

\begin{center}
     \begin{figure}[!htbp]
        \centering
        \includegraphics[width=0.5\linewidth]{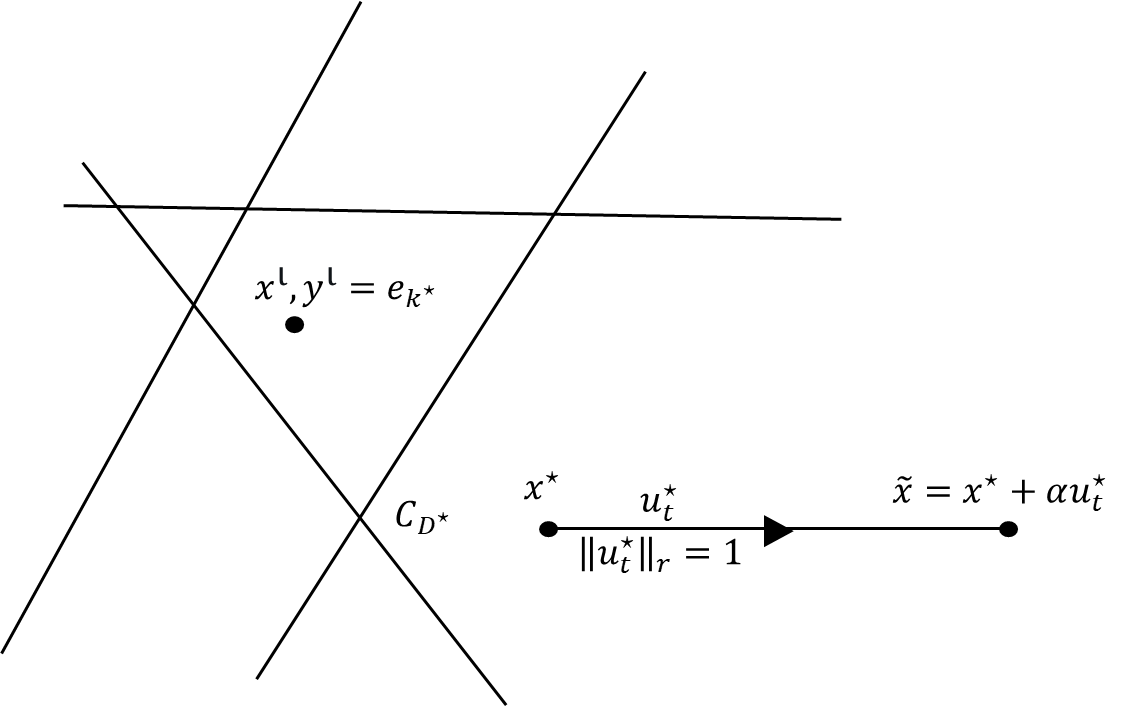}
        \caption{Illustration of Proof of Lower Bound}
    \end{figure}
\end{center}

\paragraph{Sufficient condition of $\bm{l}=\bm{L}$.}
Suppose that the dual-norm maximizer $ \xi =  \mathcal{M}_r(J_{\bm{D}^{\star}}^\top(\bm{e}_{k'^{\star}}-\bm{e}_{k^{\star}})) \in \mathrm{rec}({\mathcal{C}_{\bm{D}^\star}}) $  where $\bm{D}^{\star}$ is a maximizer of \eqref{eq:LReLU} and $(k'^\star, k^\star) $ is a maximizer of \eqref{eq:lReLU}, then  we have
\begin{equation}
    \begin{aligned}
        \bm{l} &= (\bm{e}_{k'^\star}-\bm{e}_{k^\star})^\top J_{\bm{D}^\star}u^\star_t  \\
        & \geq (\bm{e}_{k'^\star}-\bm{e}_{k^\star})^\top J_{\bm{D}^\star} \xi & \text{ (since } u^\star_t \text{ is the maximizer)}\\
        & = \| (\bm{e}_{k'^\star}-\bm{e}_{k^\star})^\top J_{\bm{D}^\star}\|_s & \text{ (by definition of dual-norm maximizer)} \\
        & = \| (\bm{e}_{k'^\star}-\bm{e}_{k^\star})\|_r \|  J_{\bm{D}^\star}\|_{r\rightarrow s}  \\
        & = 2^{1/r} \|  J_{\bm{D}^\star}\|_{r\rightarrow s} = \bm{L},
    \end{aligned}
\end{equation}
where the second last equality holds true because $(\bm{e}_{k'^\star}-\bm{e}_{k^\star})$ is the largest increment direction of $J_{\bm{D}^\star}$.
   
\subsection{Proof of Theorem \ref{thm:main-smooth}}\label{app:sa}

\begin{proof}
The proof for the upper and lower bounds is similar to the methodology we discussed in our previous exchange.

\paragraph{Proof of Upper Bound.} 
The WDRO upper bound is a direct consequence of the Lipschitz continuity of the loss function. The Lipschitz constant of the combined loss function, $L_{\ell} = \sup_{Z\in\mathcal{Z}} \|\nabla_x\ell(Z;\theta)\|_r$, is bounded by the product of the Lipschitz constant of the loss with respect to the output and the Lipschitz constant of the network. That is,
\[
\bm{L} \leq \|J_\theta(x)\|_{r\to s} \cdot \|\nabla_\theta \ell\|_s = \|J_\theta(x)\|_{r\to s} \cdot 2^{1/r}
\]
\paragraph{Proof of Lower Bound.} 
The proof of the lower bound is identical with the ReLU network case, where it relies on constructing a specific adversarial distribution. This finds a point $x^\star$ and a direction $u^\star$ that maximize the rate of change of the loss. The constant $\bm{l}$ is defined as this maximum rate of change. By constructing a perturbed point $\tilde{x} = x^\star + \alpha u^\star$ and a corresponding adversarial distribution, it is shown that the worst-case loss is at least $\mathbb{E}_{\mathbb{P}_N}[\ell(Z;\theta)] + \bm{l}\epsilon$.
\end{proof}

\subsection{Proof of Corollary~\ref{cor:transformer}}

\begin{proof}
Transformer blocks are compositions of differentiable primitives (linear maps, softmax attention, residual connections, LayerNorm with $\varepsilon>0$, and smooth activations), hence differentiable on $\mathcal X$. To obtain \eqref{eq:Lbound}, one combines explicit Jacobian bounds for softmax self-attention from prior work \citep{kim2021lipschitz,castin2024smooth} with standard Lipschitz calculus: for compositions, operator norms multiply (chain rule), and for residual maps $x\mapsto x+f(x)$ one has $\mathrm{Lip}(\mathrm{Id}+f) \le 1+\mathrm{Lip}(f)$. Applying these rules block-by-block yields a product-form upper bound for $\sup_{x\in\mathcal X}|J_\theta(x)|_{2\to 2}$, and the general $r\to s$ case follows from Hölder's inequality.
\end{proof}

\section{Auxiliary Results}

\begin{lemma}[Asymptotics of loss]
\label{lem:limit_Delta_ell} 
If $\ell=\ell_{CE}$ is the cross-entropy loss in \eqref{eq:limit_Delta_ell}, then 
\[
\lim_{\alpha \to \infty} \frac{\Delta \ell_{CE}}{\alpha} = \max_{i} (J_{\bm{D}^\star} u^\star_t)_i - (J_{\bm{D}^\star} u^\star_t)_{k^\star}.
\]
Else if $\ell=\ell_{DLR}$ is the DLR loss, then 
\[
\lim_{\alpha \to \infty} \frac{\Delta \ell_{DLR}}{\alpha} = \max_{i \ne k^\star} (J_{\bm{D}^\star} u^\star_t)_i - (J_{\bm{D}^\star} u^\star_t)_{k^\star}.
\]
\end{lemma}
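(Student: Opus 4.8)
The plan is to reduce everything to the affine structure of $\theta$ along the ray, which the main proof already established: since $\{u^\star_t\}\subset\operatorname{int}(\operatorname{rec}(\mathcal{C}_{\bm{D}^\star}))$ and $\tilde{x}=x^\star+\alpha u^\star_t$ remains in $\operatorname{rec}(\mathcal{C}_{\bm{D}^\star})$ for every $\alpha>0$, the logit map is \emph{exactly} affine along this ray, so $\theta(\tilde{x})=\theta(x^\star)+\alpha v$ with $v\triangleq J_{\bm{D}^\star}u^\star_t$ a fixed vector (for fixed $t$). Writing $z_0\triangleq\theta(x^\star)$, the $j$-th logit at $\tilde{x}$ is the scalar affine function $(z_0)_j+\alpha v_j$ of $\alpha$. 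I would then treat the two losses separately, since the conceptual difference is simply whether the true class $k^\star$ participates in the relevant extremum.

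For the cross-entropy case I would expand $\ell_{CE}(\tilde{x})=-(z_0)_{k^\star}-\alpha v_{k^\star}+\log\sum_j e^{(z_0)_j+\alpha v_j}$ and subtract the value at $x^\star$, which is constant in $\alpha$. The single genuine computation is the log-sum-exp asymptotic: factoring out the dominant exponential gives $\log\sum_j e^{(z_0)_j+\alpha v_j}=\alpha\max_j v_j+\log\sum_j e^{(z_0)_j+\alpha(v_j-\max_j v_j)}$, and the residual sum converges to the finite constant $\sum_{j:\,v_j=\max_j v_j} e^{(z_0)_j}$ as $\alpha\to\infty$ because every non-maximal term decays exponentially. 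Hence the partition term equals $\alpha\max_j v_j+O(1)$; dividing $\Delta\ell_{CE}$ by $\alpha$ and letting $\alpha\to\infty$ leaves exactly $\max_j v_j-v_{k^\star}=\max_i (J_{\bm{D}^\star}u^\star_t)_i-(J_{\bm{D}^\star}u^\star_t)_{k^\star}$, matching the claim.

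For the DLR case I would use the form $\ell_{DLR}(x,\bm{e}_{k^\star})=\max_{k_1\ne k^\star}\theta(x)_{k_1}-\theta(x)_{k^\star}$ already adopted in the upper-bound proof. The competing term $\max_{k_1\ne k^\star}[(z_0)_{k_1}+\alpha v_{k_1}]$ is a pointwise maximum of finitely many affine functions of $\alpha$, so for large $\alpha$ it coincides with the branch of largest slope, giving $\alpha\max_{k_1\ne k^\star} v_{k_1}+O(1)$. Subtracting the constant value at $x^\star$, dividing by $\alpha$, and passing to the limit yields $\max_{k_1\ne k^\star} v_{k_1}-v_{k^\star}=\max_{i\ne k^\star}(J_{\bm{D}^\star}u^\star_t)_i-(J_{\bm{D}^\star}u^\star_t)_{k^\star}$, precisely because the max now excludes the true class $k^\star$.

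The main obstacle is making the log-sum-exp step airtight: I must confirm that $\log\sum_j e^{(z_0)_j+\alpha(v_j-\max_j v_j)}$ stays bounded (indeed converges) so that it is annihilated after dividing by $\alpha$, which is exactly where finiteness of the label set $\{1,\dots,K\}$ and the strict negativity of $v_j-\max_j v_j$ on non-maximizers are used. The DLR computation is comparatively routine; the only care needed is the bookkeeping when several rivals share the maximal slope, which leaves the leading-order term unchanged and only affects the bounded $O(1)$ remainder. Everything else is algebraic substitution enabled by the cell-wise affineness of $\theta$.
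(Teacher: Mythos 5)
Your proposal is correct and follows essentially the same route as the paper: exploit the affineness of $\theta$ along the ray to write $\theta(\tilde{x})=\theta(x^\star)+\alpha v$, then extract the dominant slope from the log-sum-exp (for CE) and from the finite pointwise maximum of affine functions (for DLR), with the bounded residual annihilated after dividing by $\alpha$. The only cosmetic difference is that the paper folds the constant $\theta(x^\star)$ into softmax weights before factoring out $e^{\alpha v_{\max}}$, which is algebraically identical to your direct expansion.
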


\begin{proof}
Let $\theta^\star = \theta(x^\star)$ and the change in network output be $\Delta \theta = \theta(\tilde{x}) - \theta(x^\star) = \alpha J_{\bm{D}^\star} u^\star_t$. We will analyze the limit for each loss function separately.

\paragraph{Cross-Entropy Loss.}
The difference in loss is $\Delta \ell_{CE} = \ell_{CE}(\theta(\tilde{x}), e_{k^\star}) - \ell_{CE}(\theta(x^\star), e_{k^\star})$.
Using the property $\ell_{CE}(z, e_{k^\star}) = - (z_{k^\star} - \log\sum_k e^{z_k})$, the loss difference is:
\[
\Delta \ell_{CE} = -\Delta \theta_{k^\star} + \log\left(\sum_k e^{\Delta \theta_k} \cdot \operatorname{softmax}(\theta^\star)_k\right)
\]
To find the limit of the average rate of change, $\frac{\Delta \ell_{CE}}{\alpha}$, we substitute $\Delta \theta = \alpha v$, where $v_k = (J_{\bm{D}^\star} u^\star_t)_k$.
\[
\lim_{\alpha \to \infty} \frac{\Delta \ell_{CE}}{\alpha} = \lim_{\alpha \to \infty} \left[ \frac{1}{\alpha}\log\left(\sum_k \operatorname{softmax}(\theta^\star)_k e^{\alpha v_k}\right) - v_{k^\star} \right]
\]
Let $v_{\max} = \max_k v_k$. Factoring out the dominant term $e^{\alpha v_{\max}}$ from the sum, the expression becomes:
\[
= \lim_{\alpha \to \infty} \left[ \frac{1}{\alpha}\left(\log(e^{\alpha v_{\max}}) + \log\left(\sum_k \operatorname{softmax}(\theta^\star)_k e^{\alpha(v_k - v_{\max})}\right)\right) - v_{k^\star} \right]
\]
\[
= \lim_{\alpha \to \infty} \left[ v_{\max} + \frac{1}{\alpha}\log\left(\sum_k \operatorname{softmax}(\theta^\star)_k e^{\alpha(v_k - v_{\max})}\right) - v_{k^\star} \right]
\]
The sum inside the logarithm converges to a constant value, as all terms with $v_k < v_{\max}$ go to 0. The logarithmic term is therefore bounded. The term $\frac{1}{\alpha}$ causes the entire second term to go to 0. The limit is thus:
\[
= v_{\max} - v_{k^\star} = \max_{k} (J_{\bm{D}^\star} u^\star_t)_k - (J_{\bm{D}^\star} u^\star_t)_{k^\star}
\]

\paragraph{DLR Loss.}
The difference in DLR loss is $\Delta \ell_{DLR} = \ell_{DLR}(\theta(\tilde{x}), k^\star) - \ell_{DLR}(\theta(x^\star), k^\star)$.
\[
\Delta \ell_{DLR} = \left(\max_{k \ne k^\star} \theta(\tilde{x})_k - \theta(\tilde{x})_{k^\star}\right) - \left(\max_{k \ne k^\star} \theta(x^\star)_k - \theta(x^\star)_{k^\star}\right)
\]
Substituting $\theta(\tilde{x}) = \theta^\star + \Delta \theta$:
\[
\Delta \ell_{DLR} = \left(\max_{k \ne k^\star} (\theta^\star_k + \Delta \theta_k) - \max_{k \ne k^\star} \theta^\star_k\right) - \Delta \theta_{k^\star}
\]
To find the limit of the average rate of change, $\frac{\Delta \ell_{DLR}}{\alpha}$, we substitute $\Delta \theta = \alpha v$ and analyze as $\alpha \to \infty$.
\[
\lim_{\alpha \to \infty} \frac{\Delta \ell_{DLR}}{\alpha} = \lim_{\alpha \to \infty} \frac{1}{\alpha} \left(\max_{k \ne k^\star} (\theta^\star_k + \alpha v_k) - \max_{k \ne k^\star} \theta^\star_k\right) - v_{k^\star}
\]
As $\alpha \to \infty$, the term $\alpha v_k$ dominates inside the maximum function. The limit of the maximum term is therefore $\max_{k \ne k^\star} v_k$.
\[
\lim_{\alpha \to \infty} \frac{\Delta \ell_{DLR}}{\alpha} = \left(\max_{k \ne k^\star} v_k \right) - v_{k^\star} = \max_{k \ne k^\star} (J_{\bm{D}^\star} u^\star_t)_k - (J_{\bm{D}^\star} u^\star_t)_{k^\star}
\]
\end{proof}

\begin{corollary}[Feasibility]
\label{cor:wda-feasible}
Let $\mathbb P_N=\sum_{i=1}^N \mu_i \bm{\delta}_{(x^{(i)},y^{(i)})}$ and $d((x',y'),(x,y))=\|x'-x\|_r+\infty\cdot \bm{1}_{\{y'\neq y\}}$. For $\alpha_i\in[0,1]$ and points $\tilde x^{(i)}\in\mathcal X$, define
\[
\mathbb P_{\rm adv}\triangleq \sum_{i=1}^N \mu_i\Bigl((1-\alpha_i)\bm{\delta}_{(x^{(i)},y^{(i)})}
+\alpha_i \bm{\delta}_{(\tilde x^{(i)},y^{(i)})}\Bigr).
\]
Then for any $p\in[1,\infty)$,
\[
\mathcal W_{d,p}(\mathbb P_{\rm adv},\mathbb P_N)^p
\le \sum_{i=1}^N \mu_i\alpha_i\|\tilde x^{(i)}-x^{(i)}\|_r^p.
\]
In particular, if $\sum_{i=1}^N \mu_i\alpha_i\|\tilde x^{(i)}-x^{(i)}\|_r^p\le \epsilon^p$,
then $\mathbb P_{\rm adv}\in\Omega_p(\epsilon)=\{\mathbb P:\mathcal W_{d,p}(\mathbb P,\mathbb P_N)\le \epsilon\}$.
\end{corollary}

\begin{proof}
Use the explicit coupling that leaves $(1-\alpha_i)$ mass of each atom fixed and transports $\alpha_i$ mass from $(\tilde x^{(i)},y^{(i)})$ to $(x^{(i)},y^{(i)})$.
\end{proof}

\section{WDA Study}
\label{app:wda}

\paragraph{The $\kappa$ parameter.}
\cref{fig:combined_kappa} illustrates how the robust accuracy of WDA varies as the parameter $\kappa$ increases. Across all models, raising $\kappa$ beyond 1 generally leads to a noticeable drop in robust accuracy. Specifically, for the models from \cref{fig:kappa_l2} in $r_{2}$, $\epsilon = 0.5$ settings, the best performance is observed at $\kappa=3$, whereas for \cite{dingmma,cui2024decoupled,wang2023better,sehwagrobust} in \cref{fig:kappa_linf}, the lowest robust accuracy occurs at $\kappa=2$. These results indicate that increasing $\kappa$ can weaken model robustness, with the precise $\kappa$ that produces the largest drop depending on the architecture.

\begin{figure}[!htbp]
     \centering
     \begin{subfigure}[b]{0.49\linewidth}
         \centering
         \includegraphics[width=\linewidth]{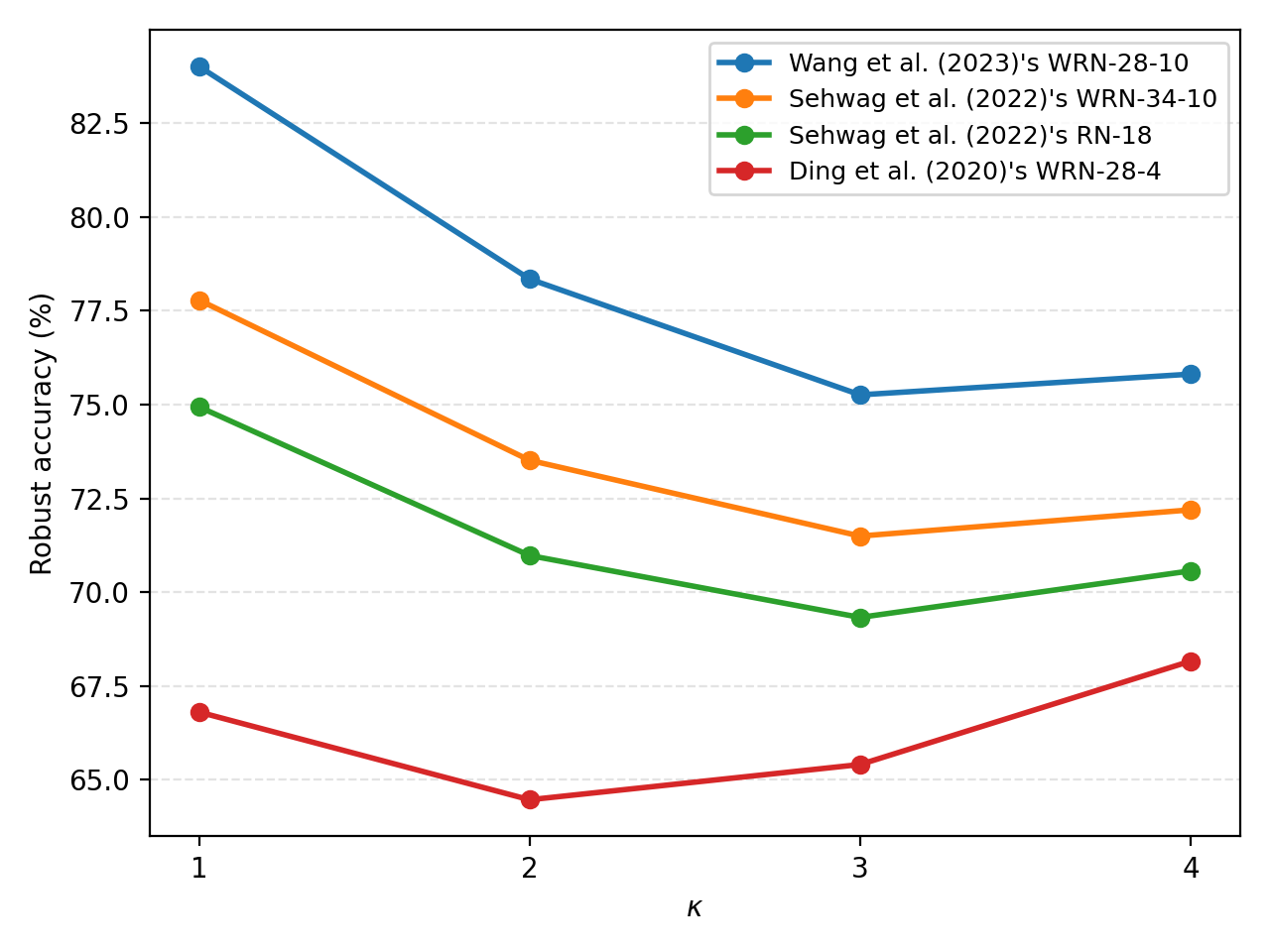}
         \caption{CIFAR-10 -- $r_{2}$, $\epsilon = 0.5$.}
         \label{fig:kappa_l2}
     \end{subfigure}
     \hfill 
     \begin{subfigure}[b]{0.49\linewidth}
         \centering
         \includegraphics[width=\linewidth]{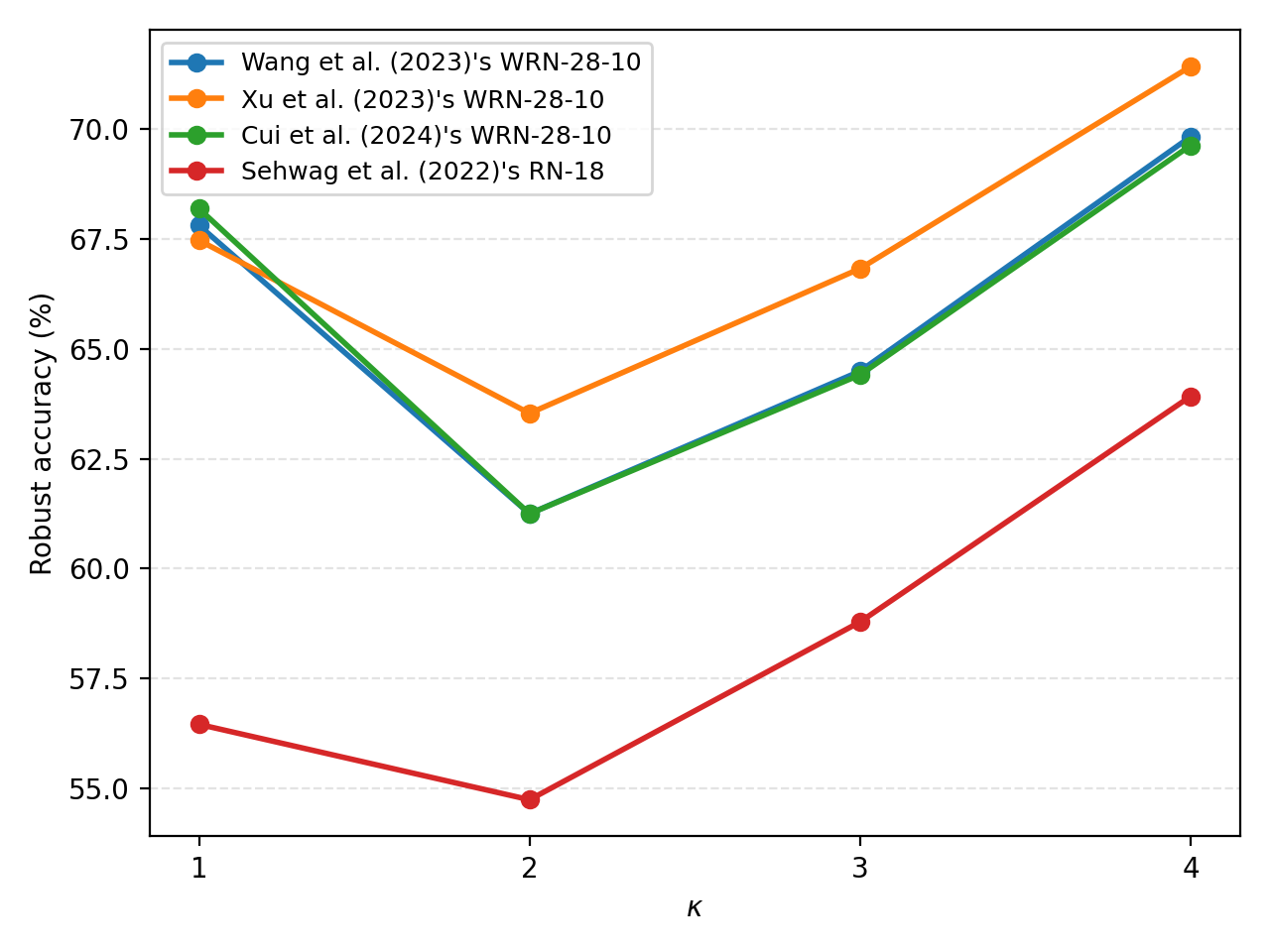}
         \caption{CIFAR-10 -- $r_\infty$, $\epsilon = 8/255$.}
         \label{fig:kappa_linf}
     \end{subfigure}
     \caption{Analysis of robust accuracy across different $\kappa$ scales.}
     \label{fig:combined_kappa}
\end{figure}

\paragraph{The step size parameter.}
In \cref{fig:l2_step_size} ($r_2, \epsilon = 0.5$), we observe that step sizes significantly larger than the budget (e.g., 1.0 and 0.5) cause the fastest initial drop in accuracy but risk overshooting the optimal perturbation. In contrast, a moderate step size of 0.2 provides the most stable convergence to the lowest robust accuracy. Very small step sizes (0.01, 0.005) fail to explore the $\epsilon$-ball effectively within the given iterations, resulting in an underestimation of the attack's potency. Similarly, in \cref{fig:linf_step_size} ($r_\infty, \epsilon = 8/255 \approx 0.031$), the choice of step size is critical for convergence. A step size of 0.02 (approximately $2/3\epsilon$) yields the most potent attack. Step sizes significantly larger than $\epsilon$ (0.1, 0.5) lead to higher final robust accuracy due to oscillation around the boundary of the $L_\infty$ ball, while the smallest step size (0.005) suffers from insufficient progress, failing to reach the floor within 20 iterations.

\begin{figure}[!htbp]
     \centering
     \begin{subfigure}[b]{0.49\linewidth}
         \centering
         \includegraphics[width=\linewidth]{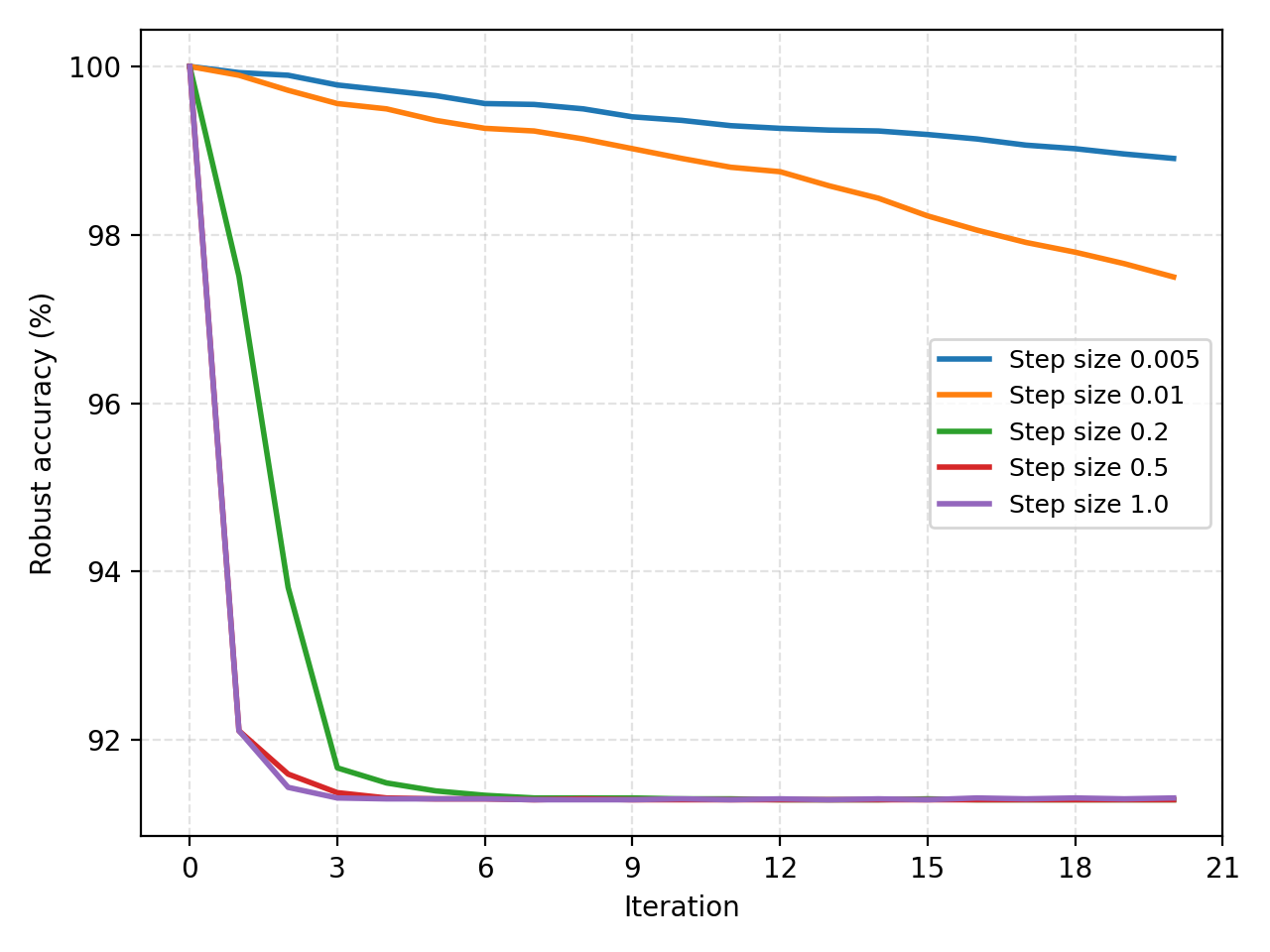}
         \caption{CIFAR-10 -- $r_{2}$, $\epsilon = 0.5$.}
         \label{fig:l2_step_size}
     \end{subfigure}
     \hfill
     \begin{subfigure}[b]{0.49\linewidth}
         \centering
         \includegraphics[width=\linewidth]{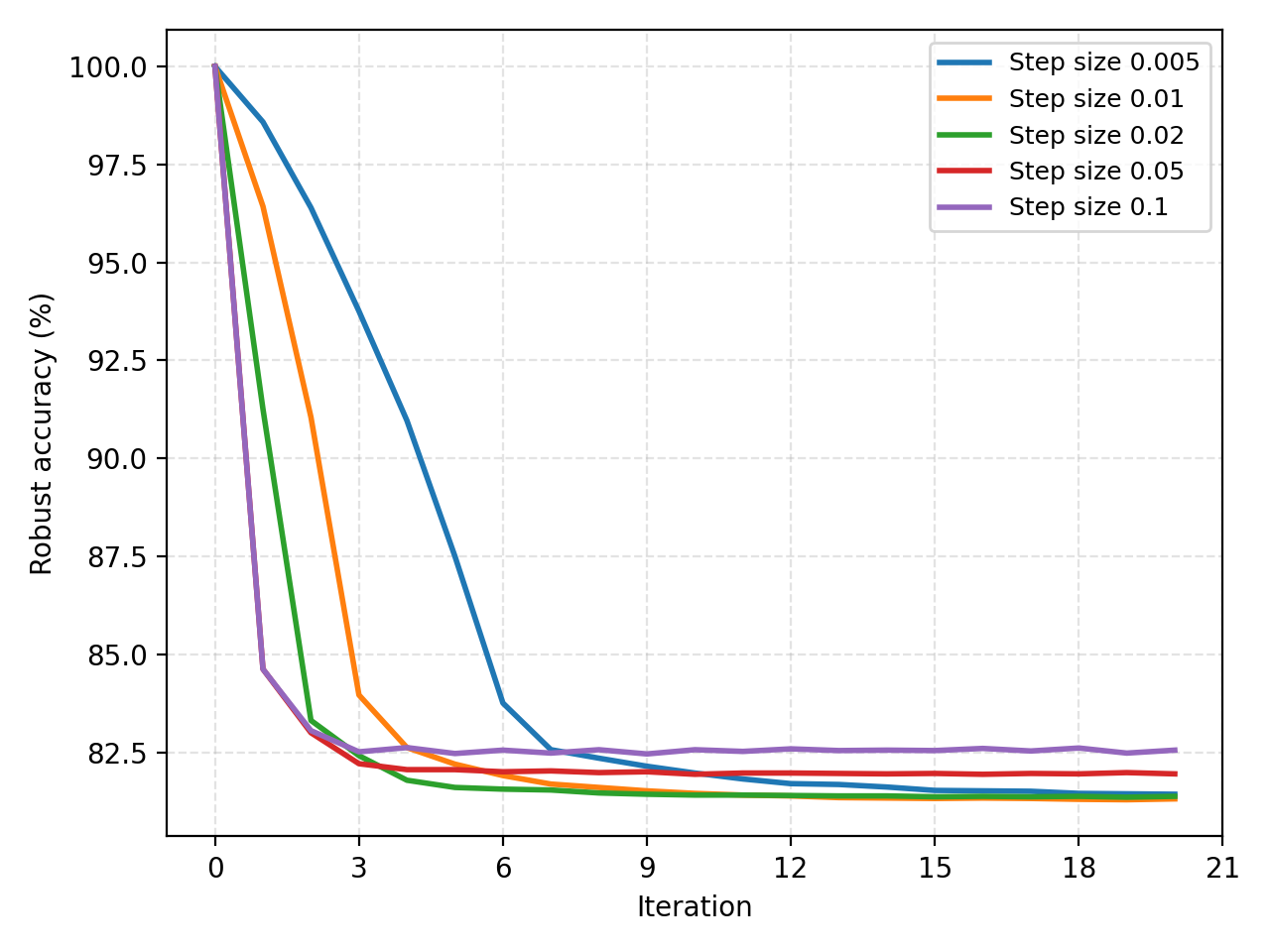}
         \caption{CIFAR-10 -- $r_\infty$, $\epsilon = 8/255$.}
         \label{fig:linf_step_size}
     \end{subfigure}
     \caption{Analysis of convergence and step size effects across different perturbation types. (a) Method from \cite{wang2023better}; (b) Method from \cite{cui2024decoupled}.}
     \label{fig:combined_step_size}
\end{figure}

\section{WDA++ Study}
\label{app:wda++}

While the standard WDA fixes a global scalar $\kappa$ for all samples, WDA++ adopts an adaptive strategy to allocate the transport budget more efficiently. The core intuition is to assign higher transport mass (larger $1/\kappa_i$) to samples that are geometrically closer to the decision boundary, as these require less ``cost" to flip. We report the classification accuracy on the constructed distribution $\mathbb{P}_{\mathrm{adv}}$ as
\[
\frac{1}{N} \sum_{i=1}^N \left(1-\frac{1}{\kappa_i}\right)\times \bm{1}_{\{h_\theta(x_i)=y_i\}} + \frac{1}{\kappa_i}\times \bm{1}_{\{h_\theta(x^{\mathrm{adv}}_i)=y_i\}},
\]
where $h_\theta(x) = \arg\max_k \theta_k(x)$ is the class predictor. The practical implementation involves three specific algorithmic components to ensure efficiency and optimality under the budget constraint. These are detailed below and correspond to the steps in Algorithm~\ref{alg:WDA++}.

\begin{algorithm}[!htbp]
\footnotesize
\caption{WDA++}
\label{alg:WDA++}
\begin{algorithmic}
\State \textbf{Inputs:} neural network $\theta:\mathbb{R}^n\to\mathbb{R}^K$, empirical distribution $\mathbb{P}_N=\frac{1}{N}\sum_{i=1}^N \bm{\delta}_{(X^{(i)},Y^{(i)})}$, budget $\epsilon>0$, cost-norm $r\in\{1,2,\infty\}$, Wasserstein order $p\in\{1,2\}$, step size $\alpha>0$, $\texttt{srchiter}$, \texttt{maxiter}, and \texttt{topK}
\State \textbf{Outputs:} adversary distribution $\mathbb{P}_{\mathrm{adv}}$ such that $\mathcal{W}_{d,p}(\mathbb{P}_{\mathrm{adv}},\mathbb{P}_N)\le \epsilon$, where $d((x',y'),(x,y))=\|x'-x\|_r+\infty\cdot \bm{1}_{\{y'\neq y\}}$
\State \textbf{Initialize:}  
dual-norm maximizer $\mathcal{M}$ \eqref{eq:Mr}
\For{$i=1$ to $N$}
    \State \textbf{if} $\arg\max_{c\in\{1,\dots,K\}}\theta_c(x_0)\ne k$ \textbf{then} $X_{\mathrm{adv}}^{(i)} \gets X^{(i)}$, $d_i \gets 0$ \textbf{continue}
    \State $x_0 \leftarrow X^{(i)}$
    \State $\bm{e}_k \leftarrow Y^{(i)}$ for some $k\in\{1,\dots,K\}$
    \State $\mathcal{J}_i \gets \arg\max_{\substack{\mathcal{J}\subseteq\{1,\dots,K\}\setminus\{k\}\\|\mathcal{J}|=\texttt{topK}}} \sum_{j\in\mathcal{J}} \theta_j(x_0)$
    \For{iter $=0$ to \texttt{maxiter}}
        \State $g_j \gets \nabla_x \theta(x_{\mathrm{iter}})^\top(\bm{e}_j-\bm{e}_k)$ for $j\in\mathcal{J}_i$
        \State $u_j \gets \mathcal{M}_r(g_j)$ for $j\in\mathcal{J}_i$
        \State $\varphi_j \gets x_{\mathrm{iter}} + \alpha u_j$ for $j\in\mathcal{J}_i$
        \State $j^* \gets \arg\max_{j\in\mathcal{J}_i}\left(\theta_j(\varphi_j) - \theta_k(\varphi_j)\right)$
        \State $x_{\mathrm{iter+1}} \gets \varphi_{j^*}$
        \State \textbf{if} $\arg\max_{c\in\{1,\dots,K\}}\theta_c(x_{\mathrm{iter+1}})\neq k$ \textbf{then} $x_{\mathrm{pre}}^{(i)} \gets x_{\mathrm{iter}}$, $x_{\mathrm{flip}}^{(i)} \gets x_{\mathrm{iter+1}}$ \textbf{break}
    \EndFor
    \State \textbf{if} $\arg\max_{c\in\{1,\dots,K\}}\theta_c(x_{\mathrm{iter+1}})=k$ \textbf{then} $X_{\mathrm{adv}}^{(i)} \gets X^{(i)}$, $d_i \gets +\infty$ \textbf{continue}
    \State $\Delta \gets x_{\mathrm{flip}}^{(i)} - x_{\mathrm{pre}}^{(i)}$, $t_\ell \gets 0$, $t_u \gets 1$
    \For{$j=1$ to \texttt{srchiter}}
        \State $t \gets (t_\ell+t_u)/2$
        \State $\tilde{x} \gets x_{\mathrm{pre}}^{(i)} + t\Delta$
        \State \textbf{if} $\arg\max_{c}\theta_c(\tilde{x}) \neq k$ \textbf{then} $t_u \gets t$ \textbf{else} $t_\ell \gets t$
    \EndFor
    \State $X_{\mathrm{adv}}^{(i)} \gets x_{\mathrm{pre}}^{(i)} + t_u\Delta$
    \State $d_i \gets \|X_{\mathrm{adv}}^{(i)} - X^{(i)}\|_r$
\EndFor
\State $\mathcal{I} \leftarrow \arg\mathrm{sort}\left(\left\{d_i: i \in \{1, \dots, N\}\right\}\right)$
\State $\kappa_i \leftarrow +\infty$ for $i\in\{1,\dots,N\}$
\State $B \leftarrow \epsilon^p$
\For{$i \in \mathcal{I}$}
    \State \textbf{if} $B\le 0$ \textbf{then break}
    \State $\kappa_i \leftarrow {1}/{\min\left\{1,\ {N B}/{d_i^p}\right\}}$ 
    \State $B \leftarrow B - \frac{1}{N} \frac{1}{\kappa_i} d_i^p$
\EndFor
\State $\mathbb{P}_{\mathrm{adv}} \gets \frac{1}{N}\sum_{i=1}^N \left(1-\frac{1}{\kappa_i}\right)\bm{\delta}_{(X^{(i)},Y^{(i)})} + \frac{1}{\kappa_i}\bm{\delta}_{(X_{\mathrm{adv}}^{(i)},Y^{(i)})}$
\State \textbf{return} $\mathbb{P}_{\mathrm{adv}}$
\end{algorithmic}
\end{algorithm}

\paragraph{Direction search via Top-K heuristic.}
Evaluating the gradient for all $K-1$ target classes can be computationally expensive for datasets with many classes (e.g., CIFAR-100, ImageNet). To accelerate the attack, we employ a \texttt{topK} heuristic. We restrict the search for the optimal adversarial direction to the top-$K$ classes with the highest logits (excluding the correct class). In our experiments, we set $\texttt{topK}=20$ for ImageNet, $\texttt{topK}=10$ for CIFAR-100 and $\texttt{topK}=5$ for CIFAR-10.

\paragraph{Boundary estimation via binary search.} 
To compute the minimal flip cost, we cannot rely solely on the gradient step, as it may overshoot or undershoot the decision boundary. Instead, we perform a binary search along the direction of the successful perturbation vector $\Delta = x_{\mathrm{flip}}^{(i)} - x_{\mathrm{pre}}^{(i)}$. This refinement step, controlled by $\texttt{srchiter}=10$, gives $d_i$.

\paragraph{Budget allocation.} 
WDA++ adaptively allocates the global Wasserstein budget $\epsilon$ by treating the attack as a Knapsack-like optimization problem: maximizing misclassified samples subject to $\sum \mu_i d_i^p /\kappa_i \le \epsilon^p$. We adopt an optimal greedy strategy that calculates the flip cost $d_i$ for each sample, sorts them in ascending order, and sequentially assigns maximum weight (up to $\kappa_i=1$) to the ``cheapest" samples until the budget $B = \epsilon^p$ is exhausted. This prioritizes the weakest links in the model to maximize the error rate for a given distributional radius.

\end{document}